\documentclass{article}

\usepackage[preprint]{neurips_2026}

\usepackage[utf8]{inputenc} 
\usepackage[T1]{fontenc}    
\usepackage[colorlinks=true,linkcolor=blue,citecolor=blue,urlcolor=blue]{hyperref}
\usepackage{url}            
\usepackage{booktabs}       
\usepackage{amsfonts}       
\usepackage{nicefrac}       
\usepackage{microtype}      
\usepackage{xcolor}         
\usepackage{amsthm}
\usepackage{amsmath}
\usepackage{mathtools}
\usepackage{extarrows}
\usepackage{mathbbol}
\usepackage{float}            
\usepackage[nameinlink]{cleveref}

\crefname{section}{\S}{\S}
\crefname{subsection}{\S}{\S}

\usepackage{standalone}
\usepackage{graphicx}
\graphicspath{{figures/}}
\usepackage{wrapfig}
\usepackage{caption}
\usepackage{booktabs}
\usepackage{enumitem}
\setlist[enumerate]{itemsep=0.2ex, parsep=0pt, topsep=0pt, partopsep=0pt}
\setlist[itemize]{itemsep=0.2ex, parsep=0pt, topsep=0pt, partopsep=0pt}

\usepackage{tikz}
\usetikzlibrary{arrows}
\usetikzlibrary{arrows.meta}
\usetikzlibrary{backgrounds, fit, decorations.pathmorphing}
\usepackage{tikzscale}
\usepackage{subcaption}
\usepackage{algorithm}
\usepackage{algpseudocode}

\makeatletter
\let\c@example\c@theorem
\makeatother

\newtheorem{theorem}{Theorem}
\newtheorem{definition}{Definition}
\newtheorem{corollary}{Corollary}
\newtheorem{proposition}{Proposition}
\newtheorem{example}{Example}
\newtheorem{lemma}{Lemma}
\newtheorem{remark}{Remark}

\usepackage{xparse}

\title{Coarsening Linear Non-Gaussian Causal Models with Cycles}

\author{
    Francisco Madaleno \\
    Department of Technology, Management and Economics,\\
    Technical University of Denmark \\
    \texttt{fmfsa@dtu.dk}
    \And
    Francisco C Pereira \\
    Department of Technology, Management and Economics,\\
    Technical University of Denmark
    \And
    Alex Markham \\
    Department of Mathematical Sciences,\\
    University of Copenhagen
}

\begin{document}

\maketitle

\begin{abstract}
	Recent work on causal abstraction, in particular graphical approaches focusing on causal structure between clusters of variables, aims to summarize a high-dimensional causal structure in terms of a low-dimensional one.
	Existing methods for learning such summaries from data assume that both the high- and low-dimensional structures are acyclic, which is helpful for causal effect identification and reasoning but excludes many high-dimensional models and thus limits applicability.
	We show that in the linear non-Gaussian (LiNG) setting, the high-dimensional acyclicity assumption can be relaxed while still allowing recovery of a low-dimensional causal directed acyclic graph (DAG).
	We further connect identifiability of this low-dimensional DAG to existing results:
	LiNG models with cycles are observationally identifiable only up to an equivalence class whose members differ by reversals of directed cycles;
	our low-dimensional DAG, which is invariant across all members of a given equivalence class, thus forms a natural representative of the class.
	While existing approaches for learning this observational equivalence class over high-dimensional variables have exponential time complexity, our low-dimensional summary is learned in worst-case cubic time and comes with explicit bounds on the sample complexity.
	We provide open source code and experiments on synthetic data to corroborate our theoretical results.
\end{abstract}

\section{Introduction}

Causal discovery from observational data has predominantly focused on directed \emph{acyclic} graphs (DAGs) \cite{shimizu2006lingam, shimizu2011directlingam}. Yet many systems of scientific interest contain feedback loops: gene-regulatory networks \cite{sachs2005causal} and dynamical systems \cite{mooij2013ode} exhibit directed cycles. A line of work on \emph{cyclic} causal models has therefore developed, ranging from the structural-equation formulation of \cite{spirtes1995directed,bongers2021foundations} to the linear non-Gaussian models of \cite{lacerda2008discovering}, which exploit standard ICA \citep{hyvarinen2001independent, eriksson2004identifiability} to recover the demixing matrix $W = I - B$ of a linear cyclic Structural Causal Model (SCM) up to row permutation and scaling, from which the weighted adjacency matrix $B$---and hence the directed graph $G$---can be recovered up to a distributional equivalence class. Additionally, other approaches have tried to identify causal orderings among groups of variables: early extensions of linear non-Gaussian methods introduced techniques to estimate causal orderings among multidimensional, disjoint sets of variables \cite{kawahara2010grouplingam, entner2012estimating}. The GroupLiNGAM algorithm of \cite{kawahara2010grouplingam} has worst-case complexity exponential in the number of variables. More recently, this perspective has been revisited for models containing disjoint cycles  \cite{pmlr-v286-drton25a}, with identification results that rely on second- and third-order moments and thus require the noise distribution to be skewed. In contrast, our approach relies only on general non-Gaussianity via standard ICA, does not require disjoint cycles, and runs in polynomial time. We provide an empirical comparison with that line of work, even though their identification target (the full variable-level cyclic graph under disjoint cycles) differs from ours.

The coarsening framework of \cite{wahl2024foundations, madaleno2026coarsening} characterizes when a DAG can be simplified by grouping variables into clusters. We show that this framework extends naturally to directed graphs with cycles: the strongly connected components (SCC) partition emerges as a structural floor on the coarsening lattice---the finest partition whose quotient is a DAG---and the resulting cluster-DAG is exactly the \emph{condensation} $G^{\mathrm{sc}}$ of $G$ in the classical sense of graph theory \citep{Bondy1976}. This is the identification target that resolves the ambiguity in the equivalence class of \cite{lacerda2008discovering}.

\paragraph{Contributions.} We extend the coarsening framework to directed graphs that may contain cycles, and connect it to the cyclic-SCM identification literature:
\begin{itemize}
	\item \textbf{The SCC floor (\cref{sec:scc-floor}).} The coarsening lattice acquires a structural floor: any valid DAG-coarsening must group nodes in a common SCC, and the SCC partition is the finest valid one.
	\item \textbf{Condensation identifiability (\Cref{thm:condensation-id}, \cref{sec:approach-oica}).} The condensation $G^{\mathrm{sc}}$ of a linear non-Gaussian cyclic SCM is invariant across the distributional equivalence class of \cite{lacerda2008discovering}, and is therefore identifiable from observational data alone, without invoking the stability and disjoint-cycles assumptions of that work.
\end{itemize}
As a downstream payoff, identifying $G^{\mathrm{sc}}$ from observations alone supplies the prerequisite of the C-DAG identification theorems of \cite{anand2023causal}, reducing cluster-level causal-effect identification in a cyclic system to the standard acyclic case.

The paper is organized as follows: \cref{sec:background} fixes notation and recalls the coarsening lattice and cyclic-SCM background; \cref{sec:cycles} introduces DAG-coarsenings of cyclic graphs, proves the SCC-floor proposition, presents the learning procedure, proves condensation identifiability and discusses the implication for cluster-level causal-effect identification; \cref{sec:experiments} reports the synthetic experiments\footnote{An open source implementation as well as scripts for reproducing all of the following experiments can be found at \url{https://github.com/fmfsa/coarsening-cycles}.}; \cref{sec:discussion} concludes.

\section{Related Work}
\label{sec:related}

\textbf{Cyclic generalisations of structural causal models} date back to \cite{spirtes1995directed}, with rigorous foundations in \cite{bongers2021foundations}. \cite{lacerda2008discovering} showed that linear non-Gaussian cyclic SCMs with no latent confounders can be identified via standard ICA: ICA recovers the demixing matrix $\hat W$, from which one obtains a candidate adjacency matrix $\hat B = I - \mathrm{diag}(P\hat W)^{-1} P\hat W$ for each admissible row permutation $P$, and thresholding yields the graph. Later work has extended this to latent-confounded settings via overcomplete ICA \citep{salehkaleybar2020learning, dai2026distributional}, extensions of LiNGAM \citep{maeda2020rcd}, and constraint-based methods using $\sigma$-separation \citep{forre2018constraint}. \cite{amendola2020structure} study structure learning for linear cyclic SCMs including the Gaussian case, while \citep{tramontano2022learning} adapt linear non-Gaussian models to learn graphs that are polytrees and \cite{rothenhausler2015backshift} leverages interventional data. The condensation also appears in recent experiment-design work for linear non-Gaussian cyclic models \citep{sharifian2025}; our contribution is to place this invariance inside the coarsening lattice as its structural floor, connecting it to the broader coarsening/abstraction literature and to C-DAG identification.

Another line of work grounds causal models in \textbf{stochastic differential equations} \cite{HansenSokol2014}, rather than standard structural equations.
Graphical Lyapunov models \cite{pmlr-v124-varando20a,pmlr-v236-dettling24a,recke2026continuous,recke2026discrete} encode directed causal structure via a weighted adjacency matrix that solves either the continuous- and discrete-time Lyapunov equation, while naturally allowing for cycles.

\textbf{Coarsening of causal models} has been approached from several angles: causal abstraction in the SCM sense \citep{rubenstein2017causal,beckers2020approximate}, group-level structure under partial-order assumptions \citep{wahl2024foundations} and the partition-refinement approach of \cite{madaleno2026coarsening} that we extend here. The C-DAG identification theorems of \cite{anand2023causal} characterize which cluster-level interventional distributions are recoverable once an acyclic cluster-level graph is given; the condensation we identify in this paper is precisely such a graph, supplied from observational data.

The condensation construction is standard in \textbf{graph theory}; \cite{tarjan1972depth} gives a linear-time algorithm. Our use of the SCC partition as the finest valid DAG-coarsening is, to our knowledge, new in the causal-discovery context: it transforms a cyclic model into an acyclic surrogate to which the standard causal effect identification machinery for DAGs applies.

\section{Background}
\label{sec:background}

\subsection{Coarsening causal models}
\label{sec:coarsening-dag-models}
Let $G^* = (V^*, E^*)$ be a DAG. A \emph{coarsening} \cite{madaleno2026coarsening} of $G^*$ is a partition $\Pi = \{\pi_1,\ldots,\pi_k\}$ of $V^*$ such that the quotient graph $G' = (\Pi, E')$ with edges
\[
	E' = \{(\chi(u^*), \chi(v^*)) \mid (u^*,v^*) \in E^*,\ \chi(u^*) \neq \chi(v^*)\}
\]
is itself a DAG, where $\chi: V^* \to \Pi$ sends each node to its part. We call $G'$ the \emph{quotient graph} and the parts $\pi_i$ \emph{clusters}.

The set of all coarsenings of $G^*$ ordered by refinement forms a sublattice of the partition refinement lattice (Theorem~3 in \cite{madaleno2026coarsening}); we generalize this construction to cyclic graphs in \cref{sec:cycles}.

\subsection{Cyclic structural causal models}
\label{sec:cyclic-scm}

A linear cyclic SCM is defined by:
\begin{equation}\label{eq:sem}
	X = BX + \varepsilon, \qquad \text{solved as } X = (I - B)^{-1} \varepsilon,
\end{equation}
where $d = |V|$ is the number of variables, $B \in \mathbb{R}^{d \times d}$ is the weighted adjacency matrix of a directed graph $G = (V, E)$ (which may contain cycles), and the noise vector $\varepsilon \in \mathbb{R}^d$ has mutually independent components. The spectral radius $\rho(B) = \max_i |\lambda_i(B)|$ controls when \eqref{eq:sem} is solvable: existence and uniqueness require only $\det(I - B) \neq 0$, while the stronger condition $\rho(B) < 1$ additionally ensures dynamical stability of the underlying iteration and is the regime studied by \cite{lacerda2008discovering}.

Two matrices play distinct roles. The \emph{mixing matrix} $A = (I - B)^{-1}$ maps independent sources to observations, $X = A\varepsilon$; its support encodes the transitive closure of $G$ (i.e., total effects/reachability) under no exact cancellations. The \emph{demixing matrix} $W = I - B = A^{-1}$ recovers sources from observations via $\varepsilon = W X$; its off-diagonal support encodes the \emph{direct} adjacency of $G$. Throughout, $\widehat W$ denotes the sample-level FastICA estimate of $W$.

\section{Coarsening Directed Graphs with Cycles}
\label{sec:cycles}


We first generalize the notion of coarsening from DAGs to directed graphs that may contain cycles.

\begin{definition}[DAG-coarsening of a directed graph]\label{def:dag-coarsening}
	Let $G = (V, E)$ be a directed graph (possibly containing cycles), and let $\Pi = \{\pi_1, \ldots, \pi_k\}$ be a partition of $V$ with associated surjection $\chi : V \to \Pi$. The \emph{quotient graph} is $G' = (\Pi, E')$ with
	\[
		E' = \{(\chi(u), \chi(v)) \mid (u, v) \in E,\ \chi(u) \neq \chi(v)\}.
	\]
	We call $\Pi$ a \emph{DAG-coarsening} of $G$ when $G'$ is acyclic.
\end{definition}

When $G$ is a DAG this reduces to the original definition of \cite{madaleno2026coarsening}: the trivial singleton-per-node partition has an acyclic quotient, and so does any coarsening of it. For graphs containing cycles the acyclicity requirement is non-trivial and forces SCCs to be respected, as \Cref{prop:scc-floor} below shows.

\subsection{The SCC floor}
\label{sec:scc-floor}

\begin{definition}[SCC coarsening/condensation]\label{def:scc-coarsening}
	Given a directed graph $G = (V, E)$, the \emph{SCC coarsening} of $G$ is the DAG-coarsening whose partition surjection $\chi$ satisfies
	\[
		\text{for all } v, w \in V, \quad \chi(v) = \chi(w) \iff \mathrm{sc}_G(v) = \mathrm{sc}_G(w),
	\]
	where $\mathrm{sc}_G(v)$ is the strongly connected component of $G$ containing $v$. The induced quotient is the \emph{condensation} of $G$ \citep{Bondy1976}, denoted $G^{\mathrm{sc}}$. The underlying partition coincides with the \emph{maximally acyclic} partition of \cite{wahl2024foundations} (their Definition~8(ii)).
\end{definition}

\begin{proposition}[SCC floor]\label{prop:scc-floor}
	Let $G = (V, E)$ be a directed graph. Nodes $u, v \in V$ in the same SCC of $G$ must lie in the same cluster of every DAG-coarsening of $G$. Consequently, the SCC coarsening of \Cref{def:scc-coarsening} is the finest DAG-coarsening of $G$, and its quotient $G^{\mathrm{sc}}$ is the structural floor of the DAG-coarsening sublattice.
\end{proposition}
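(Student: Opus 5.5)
The plan has three steps: show that every SCC is forced into a single cluster, show that the SCC partition is itself a DAG-coarsening, and then read off the lattice statement. The main tool is path projection. A directed path in $G$ maps under $\chi$ to a directed walk in $G'$: each edge $(x,y)$ with $\chi(x)\neq\chi(y)$ becomes the edge $(\chi(x),\chi(y))\in E'$, and each edge with $\chi(x)=\chi(y)$ is simply dropped.

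\emph{Step 1 (SCCs are never split).} Suppose $u,v$ lie in the same SCC but $\chi(u)\neq\chi(v)$ for some DAG-coarsening $\Pi$. Take directed paths $u\to\cdots\to v$ and $v\to\cdots\to u$ in $G$. Their projections are directed walks in $G'$ from $\chi(u)$ to $\chi(v)$ and back. Each walk is nonempty because its endpoints differ. Concatenating them gives a closed directed walk of positive length in $G'$, and any such walk contains a directed cycle. This contradicts acyclicity of $G'$, which proves the first claim.

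\emph{Step 2 (the SCC partition is a DAG-coarsening).} This is the classical fact that the condensation is acyclic, so that \Cref{def:scc-coarsening} is well posed. Suppose $G^{\mathrm{sc}}$ had a directed cycle $C_1\to C_2\to\cdots\to C_m\to C_1$ with $m\ge 2$ and distinct SCCs $C_i$. Each edge $C_i\to C_{i+1}$ comes from some edge $(x_i,y_{i+1})\in E$ with $x_i\in C_i$ and $y_{i+1}\in C_{i+1}$. Inside each $C_i$ we can connect $y_i$ to $x_i$ by a directed path, since $C_i$ is strongly connected. Chaining these edges and internal paths shows that every node of $C_1$ reaches every node of $C_2$ and vice versa. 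So $C_1\cup C_2$ lies in one SCC, contradicting that $C_1$ and $C_2$ are distinct.

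\emph{Step 3 (finest element and floor).} Step 1 says that every DAG-coarsening $\Pi$ is refined by the SCC partition, since each SCC sits inside a single cluster of $\Pi$. Step 2 says the SCC partition is itself a DAG-coarsening. Hence it is the minimum (finest) element of the set of DAG-coarsenings under refinement, and its quotient $G^{\mathrm{sc}}$ is the bottom, or structural floor, of that poset. I would also remark that every coarsening of the SCC partition whose quotient stays acyclic is again a DAG-coarsening. This matches the DAG case, since DAG-coarsenings of $G$ correspond to coarsenings, in the sense of \cite{madaleno2026coarsening}, of the DAG $G^{\mathrm{sc}}$: the quotient of $G$ by $\Pi$ equals the quotient of $G^{\mathrm{sc}}$ by the induced partition. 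This correspondence yields the sublattice structure through Theorem~3 of \cite{madaleno2026coarsening}.

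I expect no step to be a real obstacle. The only care needed is in Step 1: intra-cluster edges vanish under projection, so I must check that the projected closed walk is nonempty. That holds precisely because $\chi(u)\neq\chi(v)$. A secondary point is stating the floor claim carefully as the minimum in the refinement order, so that it does not depend on the full sublattice claim.
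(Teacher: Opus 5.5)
Your proposal is correct, and its Step~1 is exactly the paper's argument: project the two directed paths between $u$ and $v$ into the quotient to obtain a directed cycle there, contradicting acyclicity. You are simply more careful than the paper about walks versus paths and about the projected walk being nonempty. Your Steps~2--3 prove that the condensation is acyclic and that the SCC partition is therefore the minimum in the refinement order; these fill in what the paper leaves implicit, since it builds acyclicity of $G^{\mathrm{sc}}$ into \Cref{def:scc-coarsening} via the classical graph-theoretic fact.
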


\begin{proof}
	If $u$ and $v$ are in the same SCC, there exist directed paths $u \to \cdots \to v$ and $v \to \cdots \to u$ in $G$. If $\chi(u) \neq \chi(v)$ in a coarsening, these paths induce directed paths $\chi(u) \to \cdots \to \chi(v)$ and $\chi(v) \to \cdots \to \chi(u)$ in the quotient, forming a cycle and violating the DAG requirement.
\end{proof}

Given $G$ the SCC partition is computable via Tarjan's algorithm~\citep{tarjan1972depth}.

\begin{remark}[cyclic coarsenings]
	\Cref{def:dag-coarsening,def:scc-coarsening} requires the coarsened graph to be a DAG, mirroring the original framework of \cite{madaleno2026coarsening}. A natural generalization drops this requirement and asks only that the coarsening preserves the partial order over SCCs of $G$: any partition $\chi$ such that the quotient graph (a) does not reverse a non-cyclic ancestral relation in $G$, and (b) does not introduce a cycle between nodes that are not already in a common SCC, is a valid \emph{cyclic coarsening}. The interventional and observational coarsenings considered below are all DAG-coarsenings (and hence cyclic coarsenings), so we focus on the DAG case throughout. Characterizing the general cyclic-coarsening lattice is left to future work.
\end{remark}

\begin{example}[example of \cite{lacerda2008discovering}]\label{ex:illustrative}
	Consider the five-variable linear cyclic SCM
	\[
		X_1 = \varepsilon_1,\quad
		X_2 = 1.2\,X_1 - 0.3\,X_4 + \varepsilon_2,\quad
		X_3 = 2\,X_2 + \varepsilon_3,\quad
		X_4 = -X_3 + \varepsilon_4,\quad
		X_5 = 3\,X_2 + \varepsilon_5.
	\]
	The directed graph has edges $X_1\!\to\!X_2$, $X_2\!\to\!X_3$, $X_3\!\to\!X_4$, $X_4\!\to\!X_2$ (a 3-cycle), and $X_2\!\to\!X_5$ (\Cref{fig:example} left). The cycle $X_2\!\to\!X_3\!\to\!X_4\!\to\!X_2$ forms the unique non-trivial SCC $\{X_2,X_3,X_4\}$; nodes $X_1$ and $X_5$ are singletons. Since any DAG-coarsening must absorb each SCC into a single cluster (\Cref{prop:scc-floor}), the finest valid coarsening has three clusters $\pi_1=\{X_1\}$, $\pi_2=\{X_2,X_3,X_4\}$, $\pi_3=\{X_5\}$, yielding the chain condensation $G^{\mathrm{sc}} = \pi_1 \to \pi_2 \to \pi_3$ in \Cref{fig:example}, right. Reversing the 3-cycle to $X_2 \to X_4 \to X_3 \to X_2$ (\Cref{fig:example}, center) yields a second variable-level graph with re-weighted edges that induces the same observational distribution \citep[Theorem~4]{lacerda2008discovering}: ICA cannot distinguish the two, but they share the condensation $G^{\mathrm{sc}}$---the invariance that \Cref{thm:condensation-id} formalizes.
\end{example}

\begin{figure}[t]
	\centering
	\resizebox{\textwidth}{!}{%
		\begin{tikzpicture}[
			vertex/.style={circle, draw, thick, minimum size=7mm, inner sep=0pt, font=\footnotesize\bfseries},
			intra/.style={-{Stealth[length=2mm]}, thick},
			cross/.style={-{Stealth[length=2mm]}, gray!60!black, densely dashed},
			sccbox/.style={rounded corners=6pt, dashed, thick, inner sep=5pt},
			macro/.style={circle, draw, thick, minimum size=7mm, font=\small\sffamily},
			edge/.style={-{Stealth[length=2.5mm]}, thick},
			panellabel/.style={font=\small\sffamily}
			]

			\node[vertex] (x1) at (-0.5, 1.0) {$X_1$};
			\node[vertex] (x2) at (1.6,  1.6) {$X_2$};
			\node[vertex] (x3) at (2.4,  0.4) {$X_3$};
			\node[vertex] (x4) at (0.8,  0.4) {$X_4$};
			\node[vertex] (x5) at (4.0,  1.0) {$X_5$};

			\begin{scope}[on background layer]
				\node[sccbox, draw=pink!60, fill=pink!8,
					fit=(x2)(x3)(x4),
					label={[font=\scriptsize\sffamily, pink!70!black]above:$\pi_2$}] {};
			\end{scope}

			\draw[intra] (x2) -- (x3);
			\draw[intra] (x3) -- (x4);
			\draw[intra] (x4) -- (x2);
			\draw[cross] (x1) -- (x2);
			\draw[cross] (x2) -- (x5);

			\node[vertex] (y1) at (5,  1.0) {$X_1$};
			\node[vertex] (y4) at (7.1,  1.6) {$X_4$};
			\node[vertex] (y2) at (7.9,  0.4) {$X_2$};
			\node[vertex] (y3) at (6.3,  0.4) {$X_3$};
			\node[vertex] (y5) at (9.5,  1.0) {$X_5$};

			\begin{scope}[on background layer]
				\node[sccbox, draw=pink!60, fill=pink!8,
					fit=(y2)(y3)(y4),
					label={[font=\scriptsize\sffamily, pink!70!black]above:$\pi_2$}] {};
			\end{scope}

			\draw[intra] (y3) -- (y2);
			\draw[intra] (y4) -- (y3);
			\draw[intra] (y2) -- (y4);
			\draw[cross] (y1) -- (y4);
			\draw[cross] (y4) -- (y5);

			\node[macro]                              (S1)   at (12.0,  2.1) {$\pi_1$};
			\node[macro, fill=pink!10, draw=pink!60]  (S234) at (12.0,  1.0) {$\pi_2$};
			\node[macro]                              (S5)   at (12.0, -0.1) {$\pi_3$};

			\draw[edge] (S1)   -- (S234);
			\draw[edge] (S234) -- (S5);

		\end{tikzpicture}}
	\caption{Distributional equivalence and condensation for the five-variable cyclic SCM of \cite{lacerda2008discovering} (\Cref{ex:illustrative}). (left, center) Two members of the distributional equivalence class recovered by ICA, differing only in the orientation of the 3-cycle inside the SCC $\pi_2 = \{X_2, X_3, X_4\}$; both correspond to admissible permutations of the demixing matrix $\widehat W$. (right) The shared condensation $G^{\mathrm{sc}}$ with clusters $\pi_1 = \{X_1\}$, $\pi_2 = \{X_2, X_3, X_4\}$, $\pi_3 = \{X_5\}$. ICA cannot distinguish (left) from (center), but $G^{\mathrm{sc}}$ is identifiable (\Cref{thm:condensation-id}). Note that the inter-SCC edge from $X_1$ lands on $X_2$ in (left) and on $X_4$ in (center); the cluster-level edge $\pi_1 \to \pi_2$ of $G^{\mathrm{sc}}$ is unchanged.}
	\label{fig:example}
\end{figure}

\begin{remark}[combinatorial structure]
	For $n=5$ the partition lattice has $B_5 = \sum_{k=1}^{5} S(5,k) = 1+15+25+10+1 = 52$ partitions, where $S(n,k)$ is the Stirling number of the second kind. Only $4$ are valid DAG-coarsenings of \Cref{ex:illustrative}: the singleton, the SCC floor $\{X_1\}\mid\{X_2X_3X_4\}\mid\{X_5\}$, and the two intermediate $2$-clusters $\{X_1\}\mid\{X_2X_3X_4X_5\}$ and $\{X_1X_2X_3X_4\}\mid\{X_5\}$. The remaining $48$ are invalid for two reasons (\Cref{fig:lattice}): they either split the SCC $\{X_2,X_3,X_4\}$, or they merge across condensation layers (e.g.\ $\{X_1,X_5\}\mid\{X_2,X_3,X_4\}$, where the edges $X_1\!\to\!X_2$ and $X_2\!\to\!X_5$ create a cycle in the quotient).
\end{remark}

\begin{figure}[t]
	\centering
	\resizebox{\textwidth}{!}{
		\begin{tikzpicture}[
				node distance=1.5cm and 2.5cm,
				valid/.style={rectangle, rounded corners, fill=green!40!white, inner sep=4pt, font=\sffamily, draw=green!60!black},
				invalid/.style={rectangle, inner sep=4pt, font=\sffamily, text=gray},
				invalidbox/.style={rectangle, rounded corners, fill=red!10!white, inner sep=4pt, font=\sffamily, draw=red!40!white},
				dots/.style={font=\Huge, text=gray}
			]

			\node[valid] (top) at (0, 0) {$X_1X_2X_3X_4X_5$};

			\node[valid]      (v_left)  at (-4.0, -1.5) {$X_1 \mid X_2X_3X_4X_5$};
			\node[invalidbox] (inv_mid) at ( 0.0, -1.5) {$X_1X_5 \mid X_2X_3X_4$};
			\node[valid]      (v_right) at ( 4.0, -1.5) {$X_1X_2X_3X_4 \mid X_5$};
			\node[invalid]    (other_2) at ( 8.5, -1.5) {$\dots$ 12 other invalid 2-clusters};

			\node[valid]   (scc_floor) at (0, -3.5) {$X_1 \mid X_2X_3X_4 \mid X_5$};
			\node[invalid] (other_3)   at (8.5, -3.5) {$\dots$ 24 other invalid 3-clusters};

			\node[dots]       (d1)        at (-2, -4.0) {$\vdots$};
			\node[dots]       (d2)        at ( 2, -4.0) {$\vdots$};

			\node[invalidbox] (split_ex) at (0, -5) {$X_1 \mid X_2 \mid X_3 \mid X_4 \mid X_5$};

			\draw (top) -- (v_left);
			\draw (top) -- (inv_mid);
			\draw (top) -- (v_right);

			\draw (v_left)  -- (scc_floor);
			\draw (inv_mid) -- (scc_floor);
			\draw (v_right) -- (scc_floor);

			\draw[gray, dashed, decorate, decoration={snake, amplitude=.4mm, segment length=2mm, post length=1mm}]
			(scc_floor) -- (split_ex);

			\node[anchor=east, font=\sffamily\bfseries\color{gray}] at (-2.0, -3.5) {SCC Partition Floor};

		\end{tikzpicture}}
	\caption{The valid DAG-coarsening sublattice (green) for the graph of \Cref{ex:illustrative}, embedded in the full partition lattice ($B_5 = 52$ partitions). Only 4 of the 52 partitions are valid. The 2-cluster node $\{X_1,X_5\}\mid\{X_2,X_3,X_4\}$ is invalid not because it splits an SCC but because merging $X_1$ and $X_5$ creates a cycle between the two clusters. All partitions below the SCC floor split $\{X_2,X_3,X_4\}$ and are likewise invalid.}
	\label{fig:lattice}
\end{figure}

\subsection{Observational coarsening via linear non-Gaussian ICA}
\label{sec:approach-oica}

We assume the linear cyclic SCM of \eqref{eq:sem} with mutually independent, non-Gaussian noise components $\varepsilon_i$ and \emph{no latent confounders}, so $\varepsilon$ has the same dimension as $X$. Under these assumptions, standard ICA \citep{eriksson2004identifiability} recovers $W = I - B$ up to row permutation and scaling---the regime exploited by \cite{lacerda2008discovering}. Overcomplete ICA \citep{lewicki2000learning} is needed only when the number of independent sources exceeds the number of observed variables, which arises under latent confounding \citep{salehkaleybar2020learning, dai2026distributional}. The interventional case is discussed in \Cref{app:interventional}.

Let $\hat W$ denote the ICA estimate. Each row permutation $P$ such that $P\hat W$ has nonzero diagonal entries yields an admissible candidate:
\begin{equation}\label{eq:B-from-W}
	\hat B_P \;=\; I \;-\; \mathrm{diag}(P\hat W)^{-1}\, P\hat W.
\end{equation}
The off-diagonal support of $\hat B_P$, after thresholding at $\tau$, gives a candidate directed graph $\hat G_P$. Different admissible permutations $P$ yield different members of the distributional equivalence class. \cite{lacerda2008discovering} resolve this ambiguity by selecting the first candidate with $\rho(\hat B_P) < 1$, the \emph{first-stable filter}, which under their stability assumption recovers the true $B$.

\begin{theorem}[Condensation identifiability]\label{thm:condensation-id}
	Let $G_1, G_2$ be two directed graphs underlying linear cyclic SCMs of the form~\eqref{eq:sem}, both with no latent confounders, mutually independent noise terms and at most one Gaussian noise component. If they induce the same observational distribution over $X$, then their condensations are equal: $G_1^{\mathrm{sc}} = G_2^{\mathrm{sc}}$ (same SCC partition, same set of inter-SCC edges).
\end{theorem}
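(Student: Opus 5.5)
The plan is to reduce the statement to a fact about the mixing matrices and then read the condensation off a matrix invariant under row permutation and scaling. Write $W_i = I - B_i$ for $i=1,2$. Equal observational distributions, independent noise and at most one Gaussian component let us invoke the identifiability of ICA \citep{eriksson2004identifiability}. It gives $W_2 = D P W_1$ for some permutation matrix $P$ and some invertible diagonal $D$. Thus the two demixing matrices have the same rows up to reordering and rescaling.

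Since $W_1$ and $W_2$ both have unit diagonal, the nonzero diagonal of $W_2$ forces a bijection $\sigma$ (encoded by $P$) with $(W_1)_{\sigma(j) j} \neq 0$ for every $j$. Hence $\sigma$ corresponds to a perfect matching in the bipartite support graph of $W_1$: every $j$ with $\sigma(j)\neq j$ has an edge $j \to \sigma(j)$ in $G_1$. Decompose $\sigma$ into disjoint cycles. Each nontrivial permutation cycle $(j_1 \, j_2 \cdots j_m)$ yields a directed cycle $j_1 \to j_2 \to \cdots \to j_m \to j_1$ in $G_1$ (or its reverse, depending on convention). So every cycle of $\sigma$ lies inside a single SCC of $G_1$. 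By symmetry, with $\sigma^{-1}$ and $W_1 = D^{-1}P^{-1}W_2$, every cycle of $\sigma$ also lies inside a single SCC of $G_2$.

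Next, compare the supports. Row $r$ of $W_2$ is a rescaling of row $\sigma(r)$ of $W_1$, so
\[
(W_2)_{r c} \neq 0 \iff (W_1)_{\sigma(r) c} \neq 0 .
\]
In the graph encoding, where an edge $c \to r$ exists iff $B_{rc}\neq 0$ with $r\neq c$, every edge $c \to r$ of $G_2$ corresponds to an entry $(W_1)_{\sigma(r)c}\neq 0$. That entry is either an edge $c \to \sigma(r)$ of $G_1$ or the diagonal case $c=\sigma(r)$. Since $r$ and $\sigma(r)$ share an SCC of $G_1$ by the previous step, each edge of $G_2$ maps to a $G_1$-path from $c$ into the SCC of $r$. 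Therefore reachability in $G_2$ implies reachability in $G_1$, and by symmetry the converse holds. The two graphs thus have the same reachability relation, hence the same SCCs (mutual reachability) and the same partial order on SCCs.

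The main obstacle is the last claim, \emph{same set of inter-SCC edges}, because reachability alone only fixes the transitive closure of the condensation. For this, restrict the support equivalence to pairs $c,r$ in different SCCs $S \neq T$. Then $\sigma(r) \in T$ because $\sigma$ preserves SCCs, and $c \neq \sigma(r)$. So $G_2$ has an edge from $S$ to $T$ iff $G_1$ has an edge $c \to \sigma(r)$ with $c\in S$, $\sigma(r)\in T$. Since $\sigma$ maps $T$ bijectively onto itself, this holds iff $G_1$ has some edge from $S$ into $T$. This gives equality of inter-SCC cluster edges, and hence $G_1^{\mathrm{sc}} = G_2^{\mathrm{sc}}$.

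Care is needed to fix the direction convention of $\sigma$ consistently and to note that the argument never uses the stability condition $\rho(B)<1$, only $\det(I-B)\neq 0$ and unit diagonals.
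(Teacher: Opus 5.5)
Your proof is correct, and it takes a genuinely different route from the paper's.

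The paper uses Theorem~4 of \cite{lacerda2008discovering} as a black box: distributional equivalence means $G_2$ arises from $G_1$ by a sequence of directed-cycle reversals. It then argues informally that each reversal is confined to one SCC, keeps the nodes of the reversed cycle mutually reachable, and only shifts the endpoint of an incoming inter-SCC edge within its SCC.

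You instead go directly from ICA identifiability to $W_2 = DPW_1$. The unit diagonal of $W_2$ then shows that every nontrivial cycle of $\sigma$ is a directed cycle of $G_1$, and via $\sigma^{-1}$ also of $G_2$. This re-derives from scratch exactly the part of Lacerda et al.'s characterization that is needed. The support identity $(W_2)_{rc}\neq 0 \iff (W_1)_{\sigma(r)c}\neq 0$ then gives equality of the full reachability relations. Because $\sigma$ restricts to a bijection of each SCC, it also gives equality of the inter-SCC cluster edges.

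What your version buys is rigor and self-containedness:
\begin{itemize}
\item It shows explicitly that SCCs can neither split nor merge. The paper only checks that nodes on the reversed cycle stay mutually reachable, and leaves chords and the remaining SCC members implicit.
\item It treats intra-SCC chords and inter-SCC edges uniformly. The paper relies on the somewhat loose claim that reversals act only on edges lying on directed cycles, even though the permuted rows also move the heads of chords and incoming inter-SCC edges.
\item It makes plain that only invertibility of $I-B$ and a zero diagonal of $B$ are used, not the stability condition or any other hypotheses under which Lacerda et al.\ state their theorem.
\end{itemize}

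The paper's route is shorter and frames the result in the cycle-reversal language of the equivalence class it is connecting to.
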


\begin{proof}
	By Theorem~4 of \cite{lacerda2008discovering}, equality of distributions implies $G_2$ is obtained from $G_1$ by a sequence of directed-cycle reversals.\footnote{Characterizations in the latent-variable/overcomplete-ICA setting: \cite{dai2026distributional}.}

	\emph{SCC partition.}
	Each reversal acts only on edges that lie on a directed cycle and is therefore confined to a single SCC of $G_1$. Within that SCC, the reversal reassigns the structural equations of the nodes on the cycle but preserves mutual reachability: every node on the reversed cycle retains directed paths to and from every other node on it. Nodes outside the SCC are unaffected. Hence $G_1$ and $G_2$ have the same SCC partition.

	\emph{Inter-cluster edges of $G^{\mathrm{sc}}$.} An inter-SCC edge $u \to v$ of $G$ lies on no directed cycle: any cycle through it would make $u$ and $v$ mutually reachable, hence in the same SCC. Cycle reversal acts only on edges that lie on directed cycles, so it cannot remove or add an inter-SCC edge. It can reassign the structural equation associated with a node \emph{inside} an SCC, which shifts the variable-level endpoint of an incoming inter-SCC edge to a different node within the same SCC; the cluster-level edge of $G^{\mathrm{sc}}$ is unchanged. Combined with the SCC equality above, this gives $G_1^{\mathrm{sc}} = G_2^{\mathrm{sc}}$.
\end{proof}

\begin{corollary}\label{cor:condensation-id}
	In the large-sample limit, $G^{\mathrm{sc}}$ is recoverable by running FastICA to obtain $\widehat W$, picking one admissible row permutation via the Hungarian algorithm \cite{kuhn1955hungarian} to obtain a candidate $\hat B$ via~\eqref{eq:B-from-W}, and applying Tarjan's algorithm to $\operatorname{supp}(\hat B)$. \Cref{thm:condensation-id} guarantees that the choice of admissible representative does not affect the recovered condensation.
\end{corollary}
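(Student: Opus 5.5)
The argument splits into a population-level step and a concentration step. At the population level we need: if $\widehat W$ equals $W = I - B$ up to row permutation and scaling, then every admissible permutation $P$ yields, via~\eqref{eq:B-from-W}, a matrix $\hat B_P$ that is the weighted adjacency matrix of a linear cyclic SCM inducing the same observational distribution as the true one. In the large-sample limit the Hungarian step returns some admissible permutation, so we may apply \Cref{thm:condensation-id} to the true graph $G$ and the recovered graph $\hat G_P$. This gives $\hat G_P^{\mathrm{sc}} = G^{\mathrm{sc}}$. Tarjan's algorithm computes the SCC partition exactly, and the inter-SCC edges are read off $\operatorname{supp}(\hat B_P)$, so the output is $G^{\mathrm{sc}}$.

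To verify the population claim, write $\widehat W = D Q W$ with $D$ diagonal and invertible and $Q$ a permutation. Let $M = P\widehat W$ be any row permutation with nonzero diagonal, and set $\tilde W = \mathrm{diag}(M)^{-1}M$, which has unit diagonal. Then $\tilde B_P := I - \tilde W$ has zero diagonal and
\[
	\tilde W X = \mathrm{diag}(M)^{-1}PDQ\,\varepsilon =: \tilde\varepsilon .
\]
Here $\tilde\varepsilon$ is a rescaled permutation of $\varepsilon$, so it has mutually independent components and at most one of them is Gaussian. Since $\tilde W$ is invertible, $X = (I-\tilde B_P)^{-1}\tilde\varepsilon$ is an SCM of the form~\eqref{eq:sem} with the same law of $X$. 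Hence the hypotheses of \Cref{thm:condensation-id} hold for the pair $(G, G_{\tilde B_P})$.

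The sample-level step uses consistency of FastICA: $\widehat W \to DQW$ in probability, modulo the permutation and scaling indeterminacy. For large samples the Hungarian algorithm, maximizing $\sum_i |(P\widehat W)_{ii}|$ or an equivalent objective, returns a permutation that is admissible for the limit matrix, since its objective value stays bounded away from zero. The entries of $\hat B_P$ then converge to those of $\tilde B_P$. Choose the threshold $\tau$ below the smallest nonzero magnitude of $\tilde B_P$. With probability tending to one, thresholding recovers $\operatorname{supp}(\tilde B_P)$ exactly: true zeros fall below $\tau$ and nonzeros stay above it.

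The main obstacle is this support-recovery step. One issue is that the minimal nonzero magnitude may differ across admissible permutations. Since there are finitely many permutations, we take $\tau$ smaller than the minimum over all of them. A second issue is that the Hungarian choice may fluctuate between several admissible permutations at finite sample sizes. This is harmless, because \Cref{thm:condensation-id} makes the output identical for every admissible choice. Finally, we assume no exact cancellation, so the zeros of $W$ are structural. Under that assumption the recovered support is exactly the graph of a distribution-equivalent SCM, and the corollary follows.
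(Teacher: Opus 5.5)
Your overall route is the paper's: every admissible permutation of the ICA limit gives a distribution-equivalent SCM, \Cref{thm:condensation-id} fixes the condensation, and FastICA consistency, thresholding and the deterministic Tarjan step carry this to large samples. Your explicit check via $\tilde W=\mathrm{diag}(M)^{-1}M$ is sound, and so is your comparison of $\hat B_P$ with $\tilde B_P$ rather than with the true $B$, uniformly over the finitely many admissible $P$. Both make explicit what the paper leaves to the bare appeal to the theorem. The no-cancellation assumption is superfluous. Each entry of $\tilde B_P$ is a ratio $W_{rj}/W_{ri}$ of two entries in one row of $W$, and the off-diagonal zeros of $W$ are by definition exactly the non-edges of $G$.

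The step that fails is your argument that the Hungarian step returns a permutation admissible for the limit matrix. Maximizing $\sum_i|(P\widehat W)_{ii}|$ does not ensure this. An objective bounded away from zero forces only \emph{some} diagonal entry to be nonzero, not all of them.

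Take $d=2$ and $X_2=cX_1+\varepsilon_2$ with unit-variance noises, so $\widehat W\to W=\begin{pmatrix}1&0\\-c&1\end{pmatrix}$ up to signs. The only admissible matching has objective $2$. The row swap has limiting diagonal $(-c,0)$ and objective $|c|$, so for $|c|>2$ the swap is selected. At finite $n$ its second diagonal entry is the small nonzero $\widehat W_{12}$. Hence $\hat B_{21}=-\widehat W_{11}/\widehat W_{12}$ diverges, while $\hat B_{12}\to 1/c$. For any threshold $\tau<1/|c|$, which your rule $\tau<|c|$ permits, the output contains the 2-cycle $X_1\leftrightarrow X_2$ and merges the two true singleton SCCs.

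The fix is to use an objective that penalizes every small diagonal entry. One choice is the LiNGAM cost $\sum_i 1/|(P\widehat W)_{ii}|$, or equivalently $-\sum_i\log|(P\widehat W)_{ii}|$. It diverges in probability for any permutation that puts a population zero on the diagonal and stays bounded for admissible ones, so its minimizer is admissible with probability tending to one. Alternatively, as the tolerance $\eta$ in line 2 of \Cref{alg:condensation} suggests, take a perfect matching on $\{(i,j):|\widehat W_{ij}|>\eta\}$, with $\eta$ below the smallest nonzero magnitude of the limit of $\widehat W$. By consistency this set equals the population support with probability tending to one. With either fix, the rest of your argument goes through.
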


\paragraph{Markov on the condensation.}
A second consequence of SCC aggregation is the validity of conditional-independence reasoning at the cluster level.

\begin{lemma}[Markov on the condensation]\label{lem:cond-faith}
	Under the linear cyclic SCM of~\eqref{eq:sem}, the marginal joint distribution on cluster-level features is Markov to the condensation $G^{\mathrm{sc}}$.
\end{lemma}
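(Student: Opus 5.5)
We write the model cluster by cluster and show it is a recursive (acyclic) SEM over the cluster-level vectors with independent errors; the Markov property with respect to the DAG $G^{\mathrm{sc}}$ then follows as in the acyclic case. By \cref{def:scc-coarsening} and \Cref{prop:scc-floor}, the clusters are the SCCs $\pi_1,\dots,\pi_k$ of $G$, and $G^{\mathrm{sc}}$ is a DAG. We fix a topological order of $G^{\mathrm{sc}}$ and permute the variables so that each cluster's coordinates are contiguous in that order. Every edge of $G$ is either inside an SCC or goes from an earlier to a later SCC, so $B$ becomes block lower-triangular. We write $B_{ii}$ for the diagonal blocks and $B_{ij}$, $j\neq i$, for the off-diagonal blocks, with $B_{ij}\neq 0$ only if $\pi_j\to\pi_i$ in $G^{\mathrm{sc}}$.

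The key step is to invert the system block by block. Since $I-B$ is block triangular, $\det(I-B)=\prod_i \det(I-B_{ii})$. Hence each $I-B_{ii}$ is invertible exactly under the solvability assumption $\det(I-B)\neq 0$ of \eqref{eq:sem}. The $\pi_i$-block of $X=BX+\varepsilon$ then reads
\[
	X_{\pi_i} = (I-B_{ii})^{-1}\Big(\sum_{j\in \mathrm{pa}_{G^{\mathrm{sc}}}(i)} B_{ij}X_{\pi_j} + \varepsilon_{\pi_i}\Big).
\]
So each cluster vector $X_{\pi_i}$ is a measurable function of its parent cluster vectors in $G^{\mathrm{sc}}$ and of its own noise vector $\eta_i := \varepsilon_{\pi_i}$. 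The vectors $\eta_1,\dots,\eta_k$ are mutually independent because the components of $\varepsilon$ are mutually independent and the $\pi_i$ are disjoint.

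This gives an acyclic SCM over the vector-valued nodes $X_{\pi_1},\dots,X_{\pi_k}$ with independent exogenous terms. The standard result that recursive SCMs with independent errors satisfy the local and global (d-separation) Markov properties applies verbatim to vector-valued nodes. Concretely, in topological order, $\eta_i$ is independent of $(X_{\pi_j})_{j<i}$, since those are functions of $\eta_1,\dots,\eta_{i-1}$ alone. Therefore $X_{\pi_i}$ is conditionally independent of its non-descendants given its parents, which is the local Markov property and yields the factorization $p(x)=\prod_i p(x_{\pi_i}\mid x_{\mathrm{pa}(\pi_i)})$.

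Finally, we transfer this to the cluster-level features. If the features are arbitrary measurable maps $f_i(X_{\pi_i})$, the Markov property for the features needs care, because conditioning on a lossy summary of a parent does not screen off. The plan is to state the lemma for the cluster vectors themselves, which are the natural "features". If the paper intends summary features such as one-dimensional projections, I would add the condition that they are sufficient or injective, or else weaken the claim to the global Markov property among the cluster vectors. This interpretive point, rather than the algebra, is where I expect the main obstacle. The block-inversion step itself is routine once invertibility of the diagonal blocks is secured from $\det(I-B)\neq0$.
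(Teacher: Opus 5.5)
Your proof is correct, but it follows a different route from the paper's.

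The paper works at the level of separation criteria. It invokes the $\sigma$-Markov property of linear cyclic SCMs with respect to $G$ \cite{spirtes1995directed,bongers2021foundations}. It then asserts that $\sigma$-separation statements between unions of SCCs in $G$ project onto separation statements in $G^{\mathrm{sc}}$, and concludes using that $\sigma$- and d-separation coincide on the acyclic $G^{\mathrm{sc}}$.

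You instead work directly with the structural equations. You make $B$ block lower-triangular along a topological order of $G^{\mathrm{sc}}$ and use $\det(I-B)=\prod_i\det(I-B_{ii})$ to get invertibility of each diagonal block. This turns the model into a recursive SCM over the vector nodes $X_{\pi_i}$ with mutually independent error blocks $\varepsilon_{\pi_i}$, to which the classical acyclic Markov theorem applies.

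\emph{What your route buys.}
\begin{itemize}
\item It is elementary and self-contained, needing only $\det(I-B)\neq 0$ and noise independence, with no stability assumption and no $\sigma$-separation machinery.
\item It supplies the step the paper states without proof. On the paper's route one would still have to check that d-separation of cluster sets in $G^{\mathrm{sc}}$ implies $\sigma$-separation of the corresponding unions of SCCs in $G$.
\item It exhibits the cluster-level model as a genuine acyclic SCM over vector-valued nodes. This is exactly the input the C-DAG results of \cite{anand2023causal} require, and it mirrors the paper's own hard-cluster-intervention computation in \cref{sec:effects}.
\item It extends beyond linearity to any SCM whose subsystems on each SCC are uniquely solvable.
\end{itemize}

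\emph{What the paper's route buys.} It ties the lemma to the $\sigma$-separation literature and sets up the faithfulness transfer cited from \cite{wahl2024foundations}. Your argument does not address faithfulness, but the lemma does not claim it.

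\emph{Two small remarks.}
\begin{itemize}
\item Arguing along a single topological order gives the ordered Markov property: independence from predecessors given parents. To conclude independence from all non-descendants, either choose a topological order in which all non-descendants of $\pi_i$ precede it (possible because they form an ancestral set), or cite the equivalence of the ordered, local and global directed Markov properties.
\item Your reading of ``cluster-level features'' as the cluster vectors $X_{\pi_i}$ is the one the paper implicitly uses. Your caveat is correct: lossy summaries $f_i(X_{\pi_i})$ need not be Markov to $G^{\mathrm{sc}}$, since conditioning on a summary of a middle cluster in a chain does not screen off. The paper does not address this.
\end{itemize}
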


Linear cyclic SCMs are $\sigma$-Markov to $G$ \citep{spirtes1995directed,bongers2021foundations}. Aggregating variables to their SCC clusters projects $\sigma$-separation statements about $G$ onto separation statements about $G^{\mathrm{sc}}$; since $G^{\mathrm{sc}}$ is acyclic, $\sigma$-separation collapses to ordinary d-separation, so the cluster-level distribution is Markov to $G^{\mathrm{sc}}$.
Furthermore our identification does not invoke faithfulness: \Cref{alg:condensation} recovers $\widehat{G}^{\mathrm{sc}}$ by thresholding entries of $\hat B$, not by conditional-independence testing, so faithfulness plays no role in \Cref{thm:condensation-id}. Nonetheless, Corollary~2 of \cite{wahl2024foundations} shows it transfers cleanly under the SCC partition: $\sigma$-Markovian and $\sigma$-faithful $G$ implies the same for $G^{\mathrm{sc}}$.

\Cref{alg:condensation} packages this as a procedure requiring only observational data, based on ICA-LiNG-D \citep{lacerda2008discovering} composed with Tarjan, viewed through the coarsening lattice.
Bounds on the sample complexity, in terms of \(\beta_{\mathrm{min}}\) the minimum nonzero entry of \(B\), are provided in \Cref{app:sample-complexity}.
In particular, recovering the data-generating \(G^{\mathrm{sc}}\) with probability at least \(1-\delta\) requires sample size \(n= \Theta\left( \frac{d}{\beta_{\min}^{2}\, \sqrt\delta}\right)\).

\begin{algorithm}[t]
	\caption{Condensation recovery from observational data.}
	\label{alg:condensation}
	\begin{algorithmic}[1]
		\Require Observations $X \in \mathbb{R}^{n \times d}$; threshold $\tau > 0$
		\Ensure Condensation $\widehat{G}^{\mathrm{sc}} = (\widehat\Pi, \widehat E')$
		\State $\hat W \gets \Call{FastICA}{X, d}$ \Comment{\citep{hyvarinen1999fast}, $d$ components}
		\State $P \gets \Call{Hungarian}{\hat W}$ \Comment{single admissible permutation; $|(P\hat W)_{ii}|>\eta$ for tolerance $\eta$}
		\State $\hat B \gets I - \operatorname{diag}(P\hat W)^{-1}\, P\hat W$ \Comment{from \eqref{eq:B-from-W}}
		\State $\hat B_{ij} \gets 0$ whenever $|\hat B_{ij}| < \tau$ \Comment{thresholding}
		\State $\hat \Pi \gets \Call{Tarjan}{\operatorname{supp}(\hat B)^\top}$ \Comment{SCCs \citep{tarjan1972depth}}
		\State $\hat E' \gets \{\, (\pi', \pi) : \pi \neq \pi', \ \exists\, (i, j) \in \operatorname{supp}(\hat B) \text{ with } i \in \pi,\ j \in \pi' \,\}$ \Comment{$B_{ij}\neq 0$ encodes $X_j \to X_i$}
		\State \Return $\widehat{G}^{\mathrm{sc}} = (\widehat\Pi, \widehat E')$

	\end{algorithmic}
\end{algorithm}

Enumerating all admissible permutations (the original recipe of \cite{lacerda2008discovering}) is unnecessary for our identification target: by \Cref{thm:condensation-id}, $\widehat{G}^{\mathrm{sc}}$ does not depend on the choice of admissible permutation, and \Cref{cor:condensation-id} gives consistency of \Cref{alg:condensation} in the large-sample limit.

\begin{proposition}[Two failure modes of thresholding]\label{prop:conservative}
	Finite-sample noise affects the recovered partition in two opposite directions. (a) A true intra-SCC edge whose magnitude falls below $\tau$ is dropped; if this breaks reachability inside an SCC, the estimated partition \emph{splits} a true SCC into multiple clusters. (b) A spurious entry surviving above $\tau$ adds a false edge; if it closes a directed path between two distinct SCCs, the estimated partition \emph{merges} them. Mode (b) yields a coarsening of the true SCC partition; mode (a) yields a refinement and the resulting quotient is, in general, not a valid DAG-coarsening of the true graph.
\end{proposition}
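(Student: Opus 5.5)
The plan is to fix a reference graph and compare supports. Let $P$ be the admissible permutation chosen by \Cref{alg:condensation}, and let $B_P$ be the population matrix obtained from $W$ through \eqref{eq:B-from-W} with the same $P$. The graph $G_P=\operatorname{supp}(B_P)$ is a member of the distributional equivalence class. By \Cref{thm:condensation-id}, its SCC partition and condensation coincide with those of the true $G$, so it suffices to compare $\hat G=\operatorname{supp}(\hat B)$ (after thresholding) with $G_P$. Write $\hat G = (G_P\setminus D)\cup S$, where $D$ is the set of dropped true edges (those with $|\hat B_{ij}|<\tau$ but $(B_P)_{ij}\neq 0$) and $S$ is the set of spurious surviving entries. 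I treat the two pure cases separately; the mixed case follows by applying them in sequence, since every SCC of $\hat G$ is a union of SCCs of $G_P\setminus D$.

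\emph{Mode (a).} Suppose $S=\emptyset$, so $\hat G\subseteq G_P$ is a subgraph on the same vertex set. Any directed path in $\hat G$ is also a directed path in $G_P$, so mutual reachability in $\hat G$ implies mutual reachability in $G_P$. Hence every SCC of $\hat G$ is contained in an SCC of $G_P$, and the estimated partition refines the true one. The refinement is strict exactly when some removed edge destroys the paths between two nodes of an SCC. In that case two nodes $u,v$ in a common true SCC receive different clusters. By \Cref{prop:scc-floor} applied to the true $G$, this partition is not a DAG-coarsening of $G$: the true cycle through $u$ and $v$ projects to a cycle in the quotient. I also expect that the claim ``in general, not valid'' can be sharpened to ``never valid whenever a split occurs.''

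\emph{Mode (b).} Suppose $D=\emptyset$, so $G_P\subseteq\hat G$. Reversing the argument above, every SCC of $G_P$ lies within an SCC of $\hat G$, so the estimated partition $\hat\Pi$ is a coarsening of the true SCC partition. Merging of two distinct true SCCs $C_1,C_2$ happens exactly when $S$ creates a directed path from $C_2$ back to $C_1$ that, together with an existing path from $C_1$ to $C_2$, closes a cycle. To see that $\hat\Pi$ is still a valid DAG-coarsening of the true graph, note that the quotient of $G_P$ under $\hat\Pi$ is a subgraph of the quotient of $\hat G$ under $\hat\Pi$. The latter is the condensation of $\hat G$, which is acyclic, so the former is acyclic too. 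Finally, transfer this from $G_P$ to $G$ using \Cref{thm:condensation-id}: quotients under any partition coarser than the SCC partition depend only on $G^{\mathrm{sc}}$.

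The step I expect to be most delicate is the setup, not the two combinatorial arguments. It must be justified that the finite-sample Hungarian permutation can be compared with a population $B_P$ for the same $P$. I would handle this by conditioning on the event that $\hat W$ is close enough to $W$ (up to the ICA scaling and permutation) that the chosen $P$ is admissible at population level, for instance by using the tolerance $\eta$. If that is too strong, I would state the proposition relative to whichever admissible representative $P$ selects, since \Cref{thm:condensation-id} makes the target independent of this choice.
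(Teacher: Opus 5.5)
Your proof is correct, and it proves more than the paper's own proof, which is only a two-line sketch. For (a) the paper notes that deleting an edge lying on every cycle of an SCC destroys strong connectivity; for (b) it notes that a spurious edge plus a pre-existing path between two clusters closes a cycle. It never shows that the estimated partition is in general a refinement (resp.\ coarsening) of the true one, and it does not argue either validity claim. It also tacitly measures $\hat B$ against the true $B$, not against the representative the Hungarian permutation actually estimates.

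You instead use monotonicity of mutual reachability under $\hat G\subseteq G_P$ or $G_P\subseteq\hat G$, which gives refinement/coarsening outright. \Cref{prop:scc-floor} then shows that every split makes $\hat\Pi$ invalid for $G$, so the paper's hedge ``in general'' can indeed be dropped. Your argument that the quotient of $G_P$ is a subgraph of the condensation of $\hat G$, transported via \Cref{thm:condensation-id}, proves the validity in mode (b) that the statement only implies.

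The reference $G_P$ is what makes ``true edge'' meaningful in the unstable regime, where the selected candidate need not be $B$. The cost is reliance on \Cref{thm:condensation-id} and on an event making $P$ (composed with the ICA permutation) admissible at the population level; you flag both. Your formulation is also orientation-free, which avoids a slip in the paper's (b). Under the convention of \Cref{alg:condensation} that $\hat B_{ij}\neq 0$ encodes $X_j\to X_i$, a spurious edge from $\pi'$ into $\pi$ closes a cycle only together with a path $\pi\leadsto\pi'$, not $\pi'\leadsto\pi$ as written.

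One minor overstatement on your side: merging does not occur ``exactly when'' $S$ completes a pre-existing one-way path. Two SCCs that are incomparable in $G_P$ also merge if $S$ supplies both directions. The right criterion is mutual reachability in $\hat G$, and only the ``if'' direction is needed.
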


We probe both failure modes empirically in \Cref{app:threshold-sensitivity} and discuss their connection to the sample complexity in \Cref{app:sample-complexity}.

\begin{theorem}[Time Complexity]
	\label{thm:time-complexity}
	Let $d = |V|$ be the number of observed variables and $n$ the sample size. \Cref{alg:condensation} recovers $G^{\mathrm{sc}}$ in worst-case time $\mathcal{O}(k \cdot n \cdot d^2 + d^3)$, where $k$ denotes the number of FastICA iterations to convergence.
\end{theorem}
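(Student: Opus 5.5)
The plan is to bound the cost of each line of \Cref{alg:condensation} separately and then add the bounds. The total is dominated by FastICA together with the cubic steps that act only on $d\times d$ matrices.

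\emph{Step 1: FastICA.} We first center and whiten the data. Forming the sample covariance costs $\mathcal{O}(nd^2)$. Its eigendecomposition (or Cholesky factorization) costs $\mathcal{O}(d^3)$. Applying the whitening matrix to all $n$ samples costs another $\mathcal{O}(nd^2)$.

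We use the symmetric (parallel) FastICA update. One fixed-point iteration forms the projections $\hat W Z$ of all $n$ whitened samples onto all $d$ current directions, at cost $\mathcal{O}(nd^2)$. It then evaluates the contrast nonlinearity $g$ and its derivative $g'$ elementwise, at cost $\mathcal{O}(nd)$. Finally it forms the update $\mathbb{E}[Z g(w^\top Z)] - \mathbb{E}[g'(w^\top Z)]\,w$ for all rows simultaneously, which is a $d\times n$ by $n\times d$ product costing $\mathcal{O}(nd^2)$.

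The symmetric decorrelation step $\hat W \gets (\hat W\hat W^\top)^{-1/2}\hat W$ costs $\mathcal{O}(d^3)$ per iteration. This is at most $\mathcal{O}(nd^2)$ per iteration whenever $n \ge d$. That condition must hold anyway for the sample covariance to be invertible, and it is implied by the sample-complexity requirement stated before \Cref{alg:condensation}. Over $k$ iterations, FastICA therefore costs $\mathcal{O}(k\,n\,d^2 + d^3)$. Recovering $\hat W$ in the original coordinates costs one more $\mathcal{O}(d^3)$ multiplication.

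\emph{Step 2: remaining lines.} The Hungarian algorithm \cite{kuhn1955hungarian} is run on the $d\times d$ cost matrix, for instance $-\log|\hat W_{ij}|$ or $1/|\hat W_{ij}|$ with forbidden entries below $\eta$. With the standard $\mathcal{O}(d^3)$ implementation (Jonker--Volgenant or the Munkres variant with potentials), this step is $\mathcal{O}(d^3)$.

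Forming $P\hat W$, rescaling rows by the diagonal, and computing $\hat B$ all cost $\mathcal{O}(d^2)$, and so does thresholding. Tarjan's algorithm runs in $\mathcal{O}(|V|+|E|)\subseteq\mathcal{O}(d^2)$, since the support has at most $d^2$ edges. Building $\hat E'$ is a single pass over the support with constant-time cluster lookups, at $\mathcal{O}(d^2)$, plus deduplication (for instance a $k'\times k'$ boolean matrix with $k'\le d$), also $\mathcal{O}(d^2)$.

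Summing the bounds gives $\mathcal{O}(k n d^2 + d^3)$.

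\emph{Main obstacle.} The delicate point is the per-iteration decorrelation cost $\mathcal{O}(d^3)$, which naively contributes $k d^3$ rather than $d^3$. I would absorb it as argued in Step 1, using $n\ge d$ so that $d^3\le nd^2$, and state this regime assumption explicitly. If one prefers not to assume it, the honest bound is $\mathcal{O}(k(nd^2+d^3))$, which coincides with the stated bound in the relevant regime.

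A second subtlety is that $k$ is treated as a given parameter rather than bounded. Convergence of FastICA is cubic locally but has no global iteration bound, so the theorem is phrased in terms of $k$, and I would remark on this rather than try to bound it.
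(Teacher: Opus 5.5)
Your proposal is correct and follows the same route as the paper's proof. Both cost each stage of \Cref{alg:condensation} separately and sum: FastICA at $\mathcal{O}(k\,n\,d^2)$, Hungarian matching at $\mathcal{O}(d^3)$, forming and thresholding $\hat B$ at $\mathcal{O}(d^2)$, and Tarjan plus condensation-edge formation at $\mathcal{O}(d^2)$. Your explicit accounting for whitening and for the per-iteration $\mathcal{O}(d^3)$ symmetric decorrelation, absorbed via $n \ge d$, spells out an assumption the paper leaves implicit when it says each FastICA iteration is dominated by products with the $n \times d$ data matrix.
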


The proof and a step-by-step breakdown are in \Cref{app:time-complexity}. Because reliable ICA estimation requires $n \gg d$, the $\mathcal{O}(k \cdot n \cdot d^2)$ term dominates in practice. By targeting $G^{\mathrm{sc}}$ rather than enumerating the full distributional equivalence class, we avoid the worst-case $\mathcal{O}(d!)$ N-rooks bottleneck.

\subsection{Causal Effect Identification on the Condensation}
\label{sec:effects}

\paragraph{Interventions in cyclic SCMs.} Under a soft (shift) intervention $X^{I} = (I-B)^{-1}(\varepsilon + \delta)$, effects propagate through the entire graph, including back through cycles. Hard ($\mathrm{do}$) interventions that set a single variable $X_k = c$ are excluded: they sever the equation for $X_k$, breaking the cyclic mechanism and potentially rendering the system inconsistent. However, hard interventions that target every node of a cluster $\pi$ respect the SCC structure and yield well-defined interventional distributions on $G^{\mathrm{sc}}$, which we describe next.

Fix a cluster $\pi \subseteq V$ and let $\bar\pi = V \setminus \pi$. Partition $X$, $\varepsilon$, and $B$ accordingly:
\[
	X = \begin{pmatrix} X_\pi \\ X_{\bar\pi} \end{pmatrix},\qquad
	\varepsilon = \begin{pmatrix} \varepsilon_\pi \\ \varepsilon_{\bar\pi} \end{pmatrix},\qquad
	B = \begin{pmatrix} B_{\pi\pi} & B_{\pi\bar\pi} \\ B_{\bar\pi\pi} & B_{\bar\pi\bar\pi} \end{pmatrix}.
\]
A \emph{hard cluster intervention} $\mathrm{do}(X_\pi = c)$ replaces the equations indexed by $\pi$ with $X_\pi = c$. Solving the remaining $|\bar\pi|$-system gives $X_{\bar\pi} = (I - B_{\bar\pi\bar\pi})^{-1}(B_{\bar\pi\pi}c + \varepsilon_{\bar\pi})$. When $\pi$ is a union of SCCs, $B_{\bar\pi\bar\pi}$ inherits its spectrum from a subset of $B$, so $\det(I-B)\neq 0$ already implies invertibility.

A \emph{soft cluster intervention} adds a deterministic offset $\delta_\pi \in \mathbb{R}^{|\pi|}$ to the equations of $X_\pi$, replacing $X_\pi = B_{\pi,V}\, X + \varepsilon_\pi$ with $X_\pi = B_{\pi,V}\, X + \varepsilon_\pi + \delta_\pi$. Soft interventions preserve the cyclic mechanism inside $\pi$ and induce a cluster-level shift on $G^{\mathrm{sc}}$.

\paragraph{C-DAG identification.} Once $G^{\mathrm{sc}}$ is in hand, the C-DAG identification theorems of \cite{anand2023causal} apply directly: for clusters $\pi_i, \pi_j$ of $G^{\mathrm{sc}}$, the interventional distribution $P(X_{\pi_j} \mid \mathrm{do}(X_{\pi_i} = c))$ is identifiable from $P(X)$ whenever the standard ID conditions hold on $G^{\mathrm{sc}}$.

\section{Experimental Results}
\label{sec:experiments}

We illustrate \Cref{thm:condensation-id} on synthetic linear non-Gaussian cyclic SCMs using the recipe of \cite{lacerda2008discovering} from \cref{sec:approach-oica}. The interventional case (\Cref{app:interventional}) inherits the unmodified guarantees of \cite{madaleno2026coarsening} and is not explored empirically. All experiments ran on a single CPU workstation (AMD Ryzen 9 9950X, 128 GB RAM); fit times are reported in \Cref{fig:scalability}.

\subsection{Setup}

\paragraph{Data-generating process.} We generate linear non-Gaussian cyclic SCMs on $d = 10$ variables with a controlled number $\kappa \in \{3, 4, 5\}$ of non-trivial SCCs and edge densities $\lambda \in \{0.3, 0.5, 0.8\}$. For each $(d, \kappa, \lambda)$, the $d$ nodes are randomly partitioned into $\kappa$ non-trivial SCCs (each of size $\geq 2$) plus singleton nodes. Within each SCC a directed Hamilton cycle ensures mutual reachability and additional intra-SCC directed edges are sampled at probability $\lambda$. Between clusters, edges that respect a random topological order are sampled at probability $\lambda$. Edge weights are sampled in $[0.5, 0.95]$ with random sign. Noise components $\varepsilon_i$ are mutually independent and Laplace-distributed. Sample sizes $n \in [50, 10^5]$; each cell is replicated across 10 random seeds.

Each configuration is run under two spectral-radius regimes:
\textbf{Stable} ($\rho(B) \approx 0.9$): the first-stable filter can recover the true $B$.
\textbf{Unstable} ($\rho(B) = 1.5$): the first-stable filter does not necessarily select the right equivalence-class member; by \Cref{thm:condensation-id} the condensation is the same.

\paragraph{Method (deliberately broader than \Cref{alg:condensation}).} \Cref{alg:condensation} recovers $G^{\mathrm{sc}}$ from any single admissible permutation. For the experiments we instead run the full ICA-LiNG-D enumeration of \cite{lacerda2008discovering} so that we can additionally report variable-level $F_1$ and visualize the gap between the identifiable cluster-level structure and the unidentifiable intra-SCC structure. We pick the first-stable candidate (lowest-spectral-radius candidate if no stable one exists). By \Cref{thm:condensation-id}, this choice does not affect the recovered $\widehat{G}^{\mathrm{sc}}$. We use threshold $\tau = 0.1$; sensitivity to this choice is reported in \Cref{app:threshold-sensitivity}.

\paragraph{Metrics.} Ground truth is the true SCC partition of $G$ (Tarjan on the data-generating $B$) and the corresponding condensation. All $F_1$ scores are micro-averaged over directed edges. We report:
\begin{itemize}
	\item \emph{Adjusted Rand Index} (ARI) \cite{hubert1985comparing} of the predicted partition versus the true SCC partition;
	\item \emph{cluster-DAG $F_1$}: $F_1$ of the predicted edges projected onto the true SCC partition against the edges of $G^{\mathrm{sc}}$ (the identifiable target of \Cref{thm:condensation-id});
	\item \emph{variable-level $F_1$}: $F_1$ over all directed edges (including intra-SCC), quantifying the gap between identifiable and unidentifiable structure.
\end{itemize}

\subsection{Results}

\Cref{fig:results} shows partition and edge recovery as functions of sample size for both regimes and all three densities, faceted by $\kappa$. Three observations:
\begin{itemize}
	\item \emph{Partition recovery.} ARI rises with $n$ and approaches 1 in both regimes and at all densities, confirming consistency of LiNG-D + Tarjan under non-Gaussian noise.
	\item \emph{Cluster-DAG $F_1$.} Cluster-DAG $F_1$ converges to 1 with $n$ in both regimes, directly illustrating \Cref{thm:condensation-id}: $G^{\mathrm{sc}}$ is identifiable regardless of the stability condition.
	\item \emph{Variable-level $F_1$.} In the stable regime (solid lines), the true $B$ is the unique stable representative of its equivalence class, so the first-stable filter recovers it and variable-level $F_1 \to 1$. In the unstable regime (dashed), the true $B$ has $\rho(B) = 1.5$, so the filter does not necessarily select the right member of the equivalence class and variable-level $F_1$ saturates below 1. The gap is the unidentifiable intra-SCC structure.
\end{itemize}
The contrast between the two regimes is the key empirical message: the condensation (rows 1--2) is robustly recovered even when the variable-level graph (row 3) is not. This is the practical motivation for \cref{sec:effects}: rather than chasing the unidentifiable intra-SCC structure, work at the cluster level if possible, where the C-DAG is fully recovered and standard C-DAG identification machinery applies.

\begin{figure}[t]
	\centering
	\includegraphics[width=\textwidth]{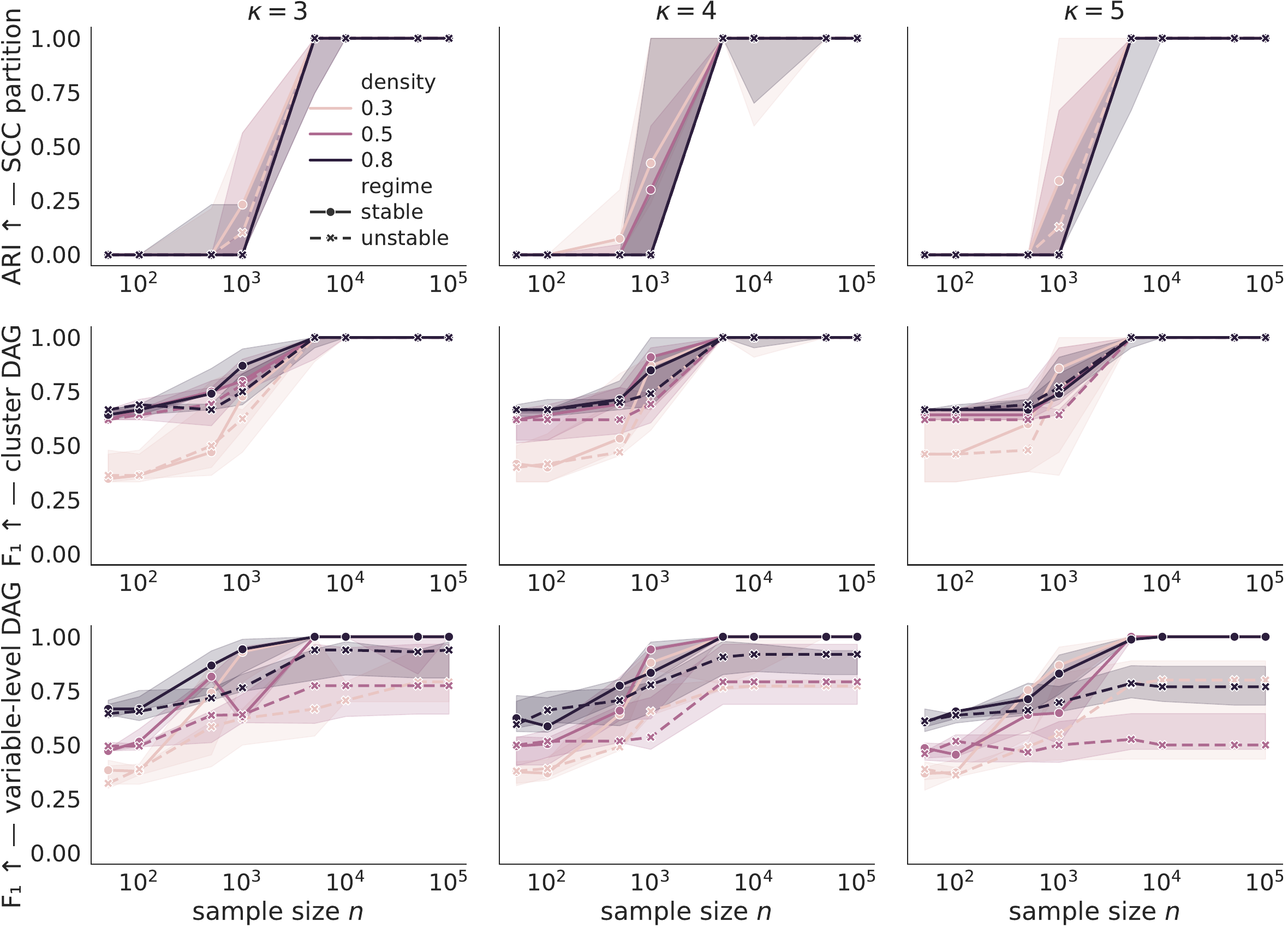}
	\caption{Synthetic-data evaluation of the first-stable ICA-LiNG-D recipe ($d = 10$, $10$ seeds per cell). Columns: number of SCCs $\kappa$. Densities $\lambda \in \{0.3, 0.5, 0.8\}$ in progressively darker shades. Solid: stable regime ($\rho(B) \approx 0.9$); dashed: unstable ($\rho(B) = 1.5$). Rows: ARI of the recovered partition; cluster-DAG $F_1$; variable-level $F_1$. Shaded bands: $95\%$ CI across seeds.}
	\label{fig:results}
\end{figure}

\subsection{Scalability vs disjoint-cycle baseline}
\label{sec:scalability}

We compare against \cite{pmlr-v286-drton25a}. Their identification target
differs from ours: they recover the full variable-level adjacency under
the assumption that all cycles in $G$ are pairwise disjoint, using
third-order moments rather than ICA, and their guarantees additionally
require the noise distribution to be skewed. The comparison is not
strictly fair---they solve a harder problem under stronger
assumptions---but it is the closest baseline available, so we project
both methods onto our cluster-level metrics. Their procedure already
returns a cluster partition: the layered topological ordering has layers that are themselves SCCs (singletons or cycles), which we read directly as the predicted partition. Cluster-DAG $F_1$ is then computed identically for both methods by projecting the variable-level adjacency onto the predicted partition.

We sweep $d \in \{20, 50, 100\}$ at density $\lambda = 0.5$ with $\kappa
	= 10$ , $n \in
	[10^2, 10^5]$, 10 seeds per cell\footnote{Our generator can nonetheless create overlapping cycles inside each SCC of size $\geq3$ This does not satisfy \cite{pmlr-v286-drton25a} assumption and might explain the results. See \Cref{app:disjoint-micro}.}. Unlike the
main experiments of \cref{sec:experiments}, which use symmetric Laplace
noise, the scalability grid uses skewed exponential noise so that the
baseline's third-moment tests have nonzero signal; both methods operate
inside their respective noise regimes. We run their procedure matching the configuration of their published simulations.

\Cref{fig:scalability} shows the result. Our method saturates cluster-DAG
$F_1$ at $n \geq 10^4$ for every $d$; ARI saturates at $n \geq 10^5$. The baseline recovers the partition only at $d = 20$ and plateaus below
$0.3$ in cluster-DAG $F_1$ for $d \in \{50, 100\}$ even at $n = 10^5$.
Fit time grows steeply for the baseline, reaching $10^4$\,s per seed at $n = 10^5$, $d = 100$, while ours stays within $10^1$\,s---consistent with the $\mathcal{O}(k\,n\,d^2 + d^3)$ bound of \Cref{thm:time-complexity}.

\begin{figure}[t]
	\centering
	\includegraphics[width=\textwidth]{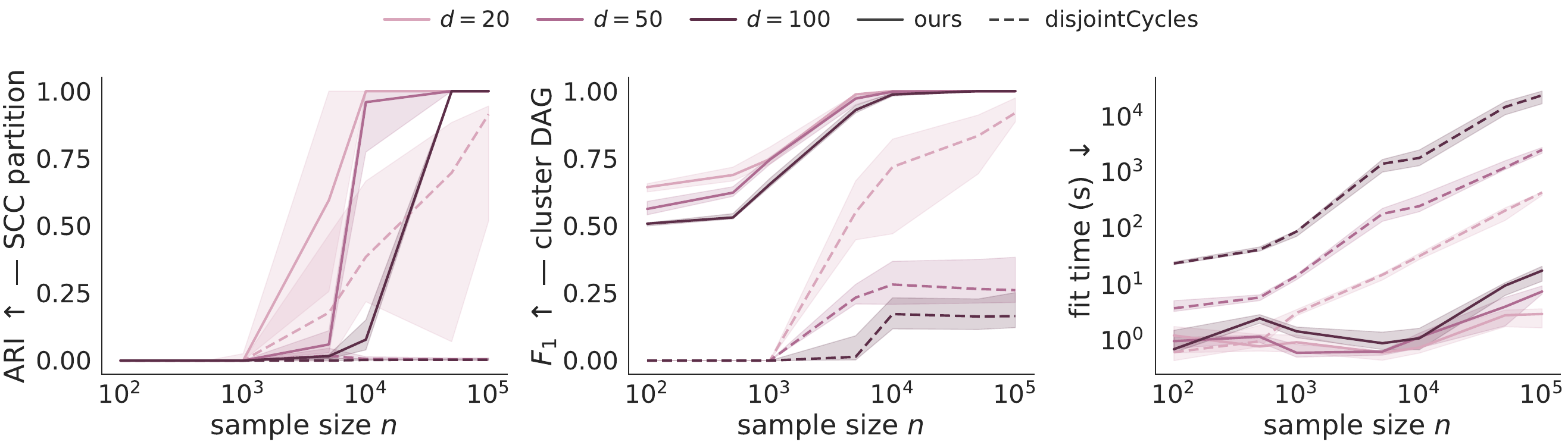}
	\caption{Scalability vs the disjoint-cycle baseline of \cite{pmlr-v286-drton25a} ($d \in \{20, 50, 100\}$, $\lambda = 0.5$, $\kappa = 10$ SCCs, 10 seeds per cell). $d$ encoded by colour intensity; solid: ours, dashed: disjointCycles. Left to right: ARI of the recovered partition, cluster-DAG $F_1$, fit time (s). Shaded bands: $95\%$ CI across seeds.}
	\label{fig:scalability}
\end{figure}

\section{Discussion}
\label{sec:discussion}

We have extended the coarsening framework of \cite{madaleno2026coarsening} to directed graphs with cycles, with the SCC partition as the lattice floor and the condensation $G^{\mathrm{sc}}$ as the canonical identification target. \Cref{thm:condensation-id} refines the cyclic-SCM identifiability picture of \cite{lacerda2008discovering}: although the underlying micro-graph is identifiable only up to a distributional equivalence class, $G^{\mathrm{sc}}$ is fully identifiable from the observational distribution, supplying the prerequisite of the C-DAG identification machinery of \cite{anand2023causal}.

The observational approach requires linear non-Gaussian noise and no latent confounders, but with the practical advantage of requiring no interventional data. Hard $\mathrm{do}$-interventions on a single variable inside a non-trivial SCC sever the cyclic mechanism and are excluded throughout; only soft (shift) interventions and hard interventions on entire SCCs are well-defined in the cyclic SCM setting (\cref{sec:effects}).

\paragraph{Open directions.} (i) Latent-confounder generalisations via overcomplete ICA \citep{salehkaleybar2020learning,dai2026distributional}: the cycle-reversal argument carries over because cycle reversals are purely a graph-level operation, independent of the source/observed dimension. (ii) Mixed-Gaussian extension---ICA's rotational ambiguity in the Gaussian subspace is global, not per-SCC: with two Gaussian sources in different SCCs, an orthogonal rotation can mix them, introducing inter-SCC entries in the equivalent demixing matrix and falsely merging the two SCCs in the recovered condensation. The condensation-invariance argument therefore survives only when at most one noise source is Gaussian \emph{overall}; lifting this restriction would address the open problem of \cite[\S\ 6]{lacerda2008discovering} on Gaussian/non-Gaussian mixtures and likely requires structural side information beyond ICA. (iii) Empirical cluster-level causal-effect estimation, applying the C-DAG ID algorithm of \cite{anand2023causal} to a recovered $\widehat{G}^{\mathrm{sc}}$ on real data.

\begin{ack}
	The work of FM and FCP was supported by Novo Nordisk Foundation grant number NNF23OC0085356.
	The work by AM was supported by Novo Nordisk Foundation grant number NNF20OC0062897. The authors thank Carlos Lima Azevedo and Serio Angelo Maria Agriesti for helpful discussions and feedback. 

\end{ack}
\bibliographystyle{plain} 
\bibliography{neurips_2026}

\medskip

\small


\appendix
\crefalias{section}{appendix}
\crefalias{subsection}{appendix}

\section{Additional Comments}
\label{app:additional-comments}

\subsection{On Interventional Extensions}
\label{app:interventional}

Under soft interventions, the cyclic SCM becomes $X^{I} = (I-B)^{-1}(\varepsilon + \delta)$, so intervention effects propagate through the whole graph, including back through cycles. Nodes within the same SCC are mutual ancestors and therefore share identical intervened-ancestor signatures, so the \texttt{RefineTest} of \cite{madaleno2026coarsening} cannot split an SCC: \texttt{RePaRe} recovers $G^{\mathrm{sc}}$ with the SCC partition as its floor without modification.

\section{Additional Proofs}
\label{app:proofs}

\subsection{Proof of \Cref{thm:time-complexity}}
\label{app:time-complexity}

\begin{proof}

	The overall time complexity of \Cref{alg:condensation} is governed by the sequential execution of statistical estimation and graph-theoretic aggregation. We break down the worst-case asymptotic time complexity by stage:

	\textbf{Independent Component Analysis (FastICA):} FastICA computes the unmixing matrix $\hat{W}$ iteratively. The computational cost per iteration is dominated by matrix multiplications involving the $n \times d$ data matrix, taking $\mathcal{O}(k \cdot n \cdot d^2)$ time, where $k$ represents the number of iterations until convergence.

	\textbf{Admissible-permutation search (combinatorial bypass).} A candidate adjacency requires a row permutation $P$ making the diagonal of $P\hat W$ nonzero. Exhaustive enumeration over the distributional equivalence class incurs an N-rooks worst-case cost of $\mathcal{O}(d!)$ (in practice typically much smaller, depending on sparsity of $\hat W$). By \Cref{thm:condensation-id}, any admissible $P$ yields the same $G^{\mathrm{sc}}$, so finding one $P$ suffices: solving a single perfect matching with the Hungarian algorithm runs in $\mathcal{O}(d^{3})$.

	\textbf{Candidate Generation and Thresholding:} Once an admissible permutation $P$ is found, computing $\hat{B}_P = I - \text{diag}(P\hat{W})^{-1}P\hat{W}$ and applying element-wise thresholding takes $\mathcal{O}(d^2)$ time. If optional stability filtering is applied, computing the spectral radius $\rho(\hat{B}_P)$ via eigenvalue decomposition bounds this step at $\mathcal{O}(d^3)$.

	\textbf{SCC Partitioning:} Extracting the Strongly Connected Components via Tarjan's algorithm runs in linear time with respect to the candidate graph's vertices and edges, taking $\mathcal{O}(d + |E|) \subseteq \mathcal{O}(d^2)$ time.

	\textbf{Condensation Edge Formation:} Scanning the edge list to establish inter-cluster adjacencies requires a single pass over the nonzero elements of the adjacency matrix, taking $\mathcal{O}(|E|) \subseteq \mathcal{O}(d^2)$ time.

	Summing these stages yields an overall worst-case time complexity strictly bounded by $\mathcal{O}(k \cdot n \cdot d^2 + d^3)$.
\end{proof}


\subsection{Transitive closure and SCCs}
\label{app:transitive-closure}

For a directed graph $G = (V, E)$, denote the existence of a directed path from vertex $u$ to vertex $v$ by the reachability relation $u \leadsto_G v$. By convention, a vertex can always reach itself (paths of length 0), so $u \leadsto_G u$ for all $u \in V$.

\begin{definition}[Strongly connected component]
	\label{def:scc}
	Define a binary relation $\sim_G$ on $V$ by
	\[
		u \sim_G v \iff (u \leadsto_G v) \land (v \leadsto_G u).
	\]
	The relation $\sim_G$ is an equivalence relation (reflexive, symmetric, transitive). A \emph{strongly connected component} (SCC) of $G$ is an equivalence class of $V$ under $\sim_G$.
\end{definition}

The following result provides an alternative route to condensation identifiability via the mixing matrix $A = (I-B)^{-1}$ and its transitive-closure interpretation. While the main proof of \Cref{thm:condensation-id} uses the more direct cycle-reversal argument, the transitive-closure perspective is independently useful---for instance, it connects to the path-rank lemma of \cite{dai2026distributional} and to settings where one works with $A$ rather than $W$.

\begin{definition}[Transitive closure]
	\label{def:transitive-closure}
	The \emph{transitive closure} of a directed graph $G = (V, E)$ is the graph $G^{\mathrm{tc}} = (V, E^{\mathrm{tc}})$ with edge set
	\[
		E^{\mathrm{tc}} = \{(u,v) \in V \times V \mid u \leadsto_G v\}.
	\]
\end{definition}

\begin{theorem}[SCCs are preserved under transitive closure]
	\label{thm:scc-tc}
	Let $G = (V, E)$ be a directed graph and let $G^{\mathrm{tc}} = (V, E^{\mathrm{tc}})$ be its transitive closure. Then $G$ and $G^{\mathrm{tc}}$ have the same SCCs.
\end{theorem}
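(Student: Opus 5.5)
The plan is to show that the reachability relation of $G^{\mathrm{tc}}$ coincides with that of $G$, i.e.\ $u \leadsto_{G^{\mathrm{tc}}} v \iff u \leadsto_G v$ for all $u,v\in V$. The equality of SCCs then follows immediately from \Cref{def:scc}, since $\sim_G$ and $\sim_{G^{\mathrm{tc}}}$ are both defined purely in terms of the respective reachability relations. Equal relations have equal equivalence classes.

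For the direction $u \leadsto_G v \Rightarrow u \leadsto_{G^{\mathrm{tc}}} v$, I would note that $E \subseteq E^{\mathrm{tc}}$. Indeed, every edge $(a,b)\in E$ is a path of length one, so $a \leadsto_G b$. Hence any directed path in $G$ is also a directed path in $G^{\mathrm{tc}}$, and the length-zero convention handles $u=v$.

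For the converse, take a directed path $u = w_0 \to w_1 \to \cdots \to w_m = v$ in $G^{\mathrm{tc}}$. Each edge $(w_{i-1},w_i)\in E^{\mathrm{tc}}$ means $w_{i-1}\leadsto_G w_i$ by \Cref{def:transitive-closure}. I would then argue by induction on $m$, using the fact that the concatenation of directed walks is a directed walk, to get $u \leadsto_G v$. The only mild subtlety is walks versus paths. A concatenation may repeat vertices, but any directed walk from $u$ to $v$ contains a directed path from $u$ to $v$, obtained by excising closed sub-walks. So either reading of ``directed path'' gives the same reachability relation.

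None of the steps is a real obstacle; the statement is essentially the observation that transitive closure is idempotent on reachability. The one point to be careful about is stating the walk-to-path reduction explicitly, and including the reflexive case. Then, with $\leadsto_G = \leadsto_{G^{\mathrm{tc}}}$ established, I would conclude $\sim_G = \sim_{G^{\mathrm{tc}}}$, so the SCC partitions agree.
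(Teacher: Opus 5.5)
Your proof is correct and follows essentially the same route as the paper. For the forward direction, the paper notes that $u \leadsto_G v$ places $(u,v)$ directly in $E^{\mathrm{tc}}$, a one-edge path, which is slightly quicker than your lifting of paths via $E \subseteq E^{\mathrm{tc}}$. For the converse, it chains the reachabilities $x_i \leadsto_G x_{i+1}$ along a path in $G^{\mathrm{tc}}$ using transitivity of $\leadsto_G$, exactly as you do; your walk-to-path remark is a harmless extra that the paper leaves implicit in that transitivity step.
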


\begin{proof}
	It suffices to show $u \sim_G v \iff u \sim_{G^{\mathrm{tc}}} v$ for all $u, v \in V$.

	\noindent($\Rightarrow$) Suppose $u \sim_G v$, so $u \leadsto_G v$ and $v \leadsto_G u$. By \Cref{def:transitive-closure}, $(u, v) \in E^{\mathrm{tc}}$ and $(v, u) \in E^{\mathrm{tc}}$, hence $u \leadsto_{G^{\mathrm{tc}}} v$ and $v \leadsto_{G^{\mathrm{tc}}} u$, i.e.\ $u \sim_{G^{\mathrm{tc}}} v$.

	\noindent($\Leftarrow$) Suppose $u \sim_{G^{\mathrm{tc}}} v$, so there is a path $u = x_0, x_1, \dots, x_k = v$ in $G^*$. Each step $(x_i, x_{i+1}) \in E^{\mathrm{tc}}$ implies $x_i \leadsto_G x_{i+1}$ by \Cref{def:transitive-closure}. Transitivity of $\leadsto_G$ then gives $u \leadsto_G v$. Symmetric reasoning yields $v \leadsto_G u$, so $u \sim_G v$.

	The two implications give equality of equivalence classes, hence equality of SCCs.
\end{proof}

\paragraph{Remark.} \Cref{thm:scc-tc} means that if one recovers $\operatorname{supp}(A) = \operatorname{adjacency}(G^{\mathrm{tc}})$---the transitive closure of $G$---from the mixing matrix $A = (I-B)^{-1}$, the SCC partition extracted from it is exactly the SCC partition of $G$. This provides an alternative proof route for the SCC-partition part of \Cref{thm:condensation-id}, going through $A$ rather than $W$. The main text uses the cycle-reversal argument because it also immediately handles inter-SCC edges without requiring a separate step.

\subsection{Sample complexity for exact condensation recovery}
\label{app:sample-complexity}

\Cref{cor:condensation-id} guarantees that \Cref{alg:condensation} returns the correct condensation \emph{in the large-sample limit}. This subsection turns that asymptotic statement into a finite-sample bound on the sample complexity, that is, how many observations $n$ are sufficient to exactly recover $G^{\mathrm{sc}}$  with probability at least $1-\delta$.

\paragraph{Setup.} Fix a linear non-Gaussian SCM $X = BX + \varepsilon$ with mutually independent noise components, at most one Gaussian, and $\det(I - B) \neq 0$. Let
\[
	\beta_{\min} \;:=\; \min_{(i,j)\,:\,B_{ij}\neq 0}\, |B_{ij}|
\]
denote the smallest nonzero entry of the data-generating $B$. Let $\hat W_n$ be the FastICA estimate from $n$ i.i.d.\ samples and let $\hat B_n$ be the candidate produced by \Cref{alg:condensation} at threshold $\tau_n$.

\begin{proposition}[Finite-sample exact recovery]
	\label{prop:finite-sample-rate}
	Assume the noise components have bounded fourth moments. There exist constants $K_1, K_2 > 0$ depending on $B$ and the noise distribution such that, for any threshold sequence $\tau_n$ with
	\[
		\tau_n \to 0,\qquad \sqrt{n}\, \tau_n \to \infty,\qquad \tau_n < \frac{\beta_{\min}}{2} \text{ for all sufficiently large } n,
	\]
	the recovered condensation satisfies
	\begin{equation}\label{eq:finite-sample-bound}
		\Pr\bigl[\, \widehat{G}^{\mathrm{sc}}_n \neq G^{\mathrm{sc}} \,\bigr]
		\;\leq\;
		\Pr\bigl[\, \operatorname{supp}(\hat B_n) \neq \operatorname{supp}(B) \,\bigr]
		\;\leq\;
		\frac{K_1}{n^{2}\, (\beta_{\min} - \tau_n)^{4}}
		\;+\;
		\frac{K_2}{n^{2}\, \tau_n^{4}}.
	\end{equation}
	The two terms correspond exactly to the two failure modes of \Cref{prop:conservative}: the first to dropping a true edge (Mode (a), splitting an SCC), the second to keeping a spurious entry (Mode (b), merging two SCCs).
\end{proposition}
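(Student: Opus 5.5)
The proof splits into two inequalities. I establish the first by a determinism argument and the second by bounding entrywise estimation error with a fourth-moment Chebyshev inequality, followed by a union bound over the two failure modes of \Cref{prop:conservative}.

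\textbf{First inequality.} This is deterministic. \Cref{alg:condensation} computes $\widehat{G}^{\mathrm{sc}}_n$ as a function of $\operatorname{supp}(\hat B_n)$ alone: Tarjan's algorithm is applied to the support, and the inter-cluster edges are read off it. By \Cref{thm:condensation-id} and \Cref{cor:condensation-id}, applying the same map to the support of any admissible representative of $B$ returns $G^{\mathrm{sc}}$. Hence the event $\{\widehat{G}^{\mathrm{sc}}_n\neq G^{\mathrm{sc}}\}$ is contained in $\{\operatorname{supp}(\hat B_n)\neq\operatorname{supp}(B)\}$, with $B$ read as the admissible representative selected by the Hungarian step. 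I will pin down which representative this is on a high-probability event. Away from ties, the matching chosen on $\hat W_n$ coincides with the one for the population $W$ once $\|\hat W_n - D P_0 W\|_\infty$ is small, where $D$ is a diagonal scaling and $P_0$ a permutation.

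\textbf{Second inequality.} Work on the event where the ICA ambiguity is resolved, i.e.\ after aligning $\hat W_n$ to $W$ up to $D P_0$. Write $\Delta_n:=\max_{ij}|(\hat B_n)_{ij}-B_{ij}|$ for the pre-threshold error. Since $\hat B = I-\operatorname{diag}(P\hat W)^{-1}P\hat W$ is a smooth map of $\hat W$ near $W$, provided the diagonal is bounded away from zero (this is what the tolerance $\eta$ secures), a Lipschitz bound gives $\Delta_n\le L\|\hat W_n-DP_0W\|_{\max}$. Then:
\begin{itemize}
\item a true edge is dropped only if $|(\hat B_n)_{ij}-B_{ij}|\ge\beta_{\min}-\tau_n$;
\item a spurious entry survives only if $|(\hat B_n)_{ij}|=|(\hat B_n)_{ij}-0|\ge\tau_n$ at a zero of $B$.
\end{itemize}
A union bound over at most $d^2$ entries splits the failure probability into exactly these two terms, matching Modes (a) and (b).

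For each term I apply Markov's inequality to the fourth power: $\Pr[|Z|\ge t]\le \mathbb{E}Z^4/t^4$. This needs $\mathbb{E}\,\|\hat W_n-DP_0W\|^4 = O(n^{-2})$. I would obtain it from the $\sqrt n$-asymptotic normality of FastICA (Hyv\"arinen, Oja; Ollila's M-estimator theory), which writes $\hat W_n - W$ as a sample mean of i.i.d.\ influence-function terms plus a remainder. For a centred sample mean with finite fourth moments, $\mathbb{E}|\bar Z|^4 = O(n^{-2})$ by the standard Rosenthal/Marcinkiewicz--Zygmund bound. The bounded-fourth-moment hypothesis on $\varepsilon$ is intended to make the influence function, which is a smooth nonlinearity times $X$, have finite fourth moment. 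The constants $L^4$, the influence-function moments, the factor $d^2$ and the matching-failure probability are then absorbed into $K_1,K_2$. The condition $\tau_n<\beta_{\min}/2$ keeps $\beta_{\min}-\tau_n$ bounded below. The conditions $\tau_n\to0$ and $\sqrt n\tau_n\to\infty$ make both terms vanish: the second term equals $K_2/(\sqrt n\,\tau_n)^4$.

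\textbf{Main obstacle.} The hard step is the uniform fourth-moment bound for the FastICA estimator. Asymptotic normality alone gives only distributional convergence, not $\mathbb{E}\|\hat W_n-W\|^4=O(n^{-2})$. The remainder of the linearization and the non-convergence and sign-flip events must also be controlled. I would first try to handle these by restricting to a high-probability event, on which the linearization remainder is $O_p(n^{-1})$, and showing its complement has probability $O(n^{-2})$. That complement bound comes from concentration of the whitening covariance estimate, which again has only fourth moments, so Chebyshev on the fourth power. If the literature bound is available only in probability, I would state the inequality for $n$ large with constants implicitly depending on that expansion. The paper's informal $\Theta(d/(\beta_{\min}^2\sqrt\delta))$ summary is consistent with this route: setting each term to $\delta$ gives $n\sim \sqrt{K}/(\beta_{\min}^2\sqrt\delta)$.
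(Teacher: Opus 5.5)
Your argument has the same architecture as the paper's proof. The paper also localizes to a ball around $W^\star$ on which $W\mapsto I-\operatorname{diag}(PW)^{-1}PW$ is Lipschitz, and bounds the complement of that event by Markov on the fourth power. It then applies Markov at the two deviation levels $\beta_{\min}-\tau_n$ and $\tau_n$, takes a union bound over the $d^2$ entries, and passes deterministically from $\operatorname{supp}(\hat B_n)=\operatorname{supp}(B)$ to the condensation. The step you single out as the main obstacle is the one the paper does not actually prove. It cites $\sqrt n$-asymptotic normality of FastICA and then asserts that bounded fourth moments of the noise give $\mathbb E[(\hat W_{n,ij}-W^\star_{ij})^4]\le M/n^2$. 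That is exactly the passage from distributional convergence to a uniform moment bound that you rightly refuse to make. So you are not missing an idea the paper supplies, but your proposal does not close the step either.

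Your proposed repair fails under the stated hypothesis. Markov on the fourth power of $\hat\Sigma-\Sigma$, or Rosenthal/Marcinkiewicz--Zygmund applied to an influence function containing the whitening term $XX^\top-\Sigma$, requires $\mathbb E[(X_iX_j)^4]<\infty$, i.e.\ essentially eighth moments of $\varepsilon$. With only four moments, $\hat\Sigma-\Sigma$ need not have a finite fourth moment, and Chebyshev gives only $\Pr[\|\hat\Sigma-\Sigma\|>r]\le C/(nr^2)$. For noise with barely more than four moments, a single large observation makes this probability decay only like $n^{-1-\epsilon}$, not $n^{-2}$. With the kurtosis contrast $g(u)=u^3$, the rotation step needs still more moments. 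Your fallback of stating the bound ``for $n$ large with constants depending on the expansion'' yields no rate. Asymptotic normality only gives $\Pr[|\hat B_{n,ij}-B_{ij}|>t_n]\to0$ when $\sqrt n\,t_n\to\infty$, which is consistency. To get the displayed $n^{-2}$ bound you must strengthen the assumptions: eighth moments of $\varepsilon$, a bounded contrast such as $\tanh$, and control of non-convergence of the fixed-point iteration on the bad event. With fourth moments alone, the natural target is the Chebyshev analogue $K_1/(n(\beta_{\min}-\tau_n)^2)+K_2/(n\tau_n^2)$, which still vanishes under $\sqrt n\,\tau_n\to\infty$.

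You are more careful than the paper about which representative is recovered. A typical Hungarian cost such as $\sum_i 1/|(P\hat W)_{ii}|$ depends on the ICA row scaling, hence on the noise variances. It can select a cycle-reversed member even when $\rho(B)<1$. In that case $\operatorname{supp}(\hat B_n)$ generally converges to a different support, and the middle probability tends to one for the data-generating $B$. The paper's Step 1 silently assumes the selected $P$ is the aligning permutation. Reading $B$ as the selected representative is what the middle inequality needs. But then $\beta_{\min}$ must be that representative's smallest nonzero entry, and uniqueness of the population assignment (your ``away from ties'') is an added assumption. Under the literal reading, the first inequality needs no appeal to \Cref{thm:condensation-id}, since equal supports give $G$ itself.
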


\begin{proof}
	The argument has three steps.

	\emph{Step 1 (entry-wise rates for $\hat B$).} Under bounded fourth moments and at most one Gaussian source, FastICA is $\sqrt n$-consistent for the unmixing matrix up to its canonical row-permutation/sign indeterminacy: with $\Pi$ a permutation and $D$ a diagonal sign matrix aligning $\hat W_n$ with $W^\star$, $\sqrt n \,(\hat W_n - W^\star \Pi D)$ is asymptotically normal with finite covariance \citep{chen2005consistent,miettinen2015fourth}, and bounded fourth moments of the noise transfer to bounded fourth moments of each entry of $\sqrt n\,(\hat W_n - W^\star \Pi D)$. Hence there exists $M > 0$ such that
	\[
		\mathbb E\bigl[(\hat W_{n,ij} - W^\star_{ij})^{4}\bigr] \;\leq\; \frac{M}{n^{2}}
	\]
	for all entries $(i,j)$ and all $n$ large enough.

	The map $W \mapsto I - \mathrm{diag}(PW)^{-1} PW$ used in \Cref{alg:condensation} is smooth on the open set $\{M : (PM)_{ii} \neq 0 \,\forall i\}$ but has singularities on its boundary, so it is not globally Lipschitz. We therefore localize via a high-probability event. Pick a closed ball $\mathcal{N}$ of radius $r > 0$ around $W^\star$ on which the map is $L$-Lipschitz (such an $L$ exists by smoothness on the compact $\mathcal N$, with $r, L$ depending only on $B$), and define
	\[
		\mathcal{E}_n \;:=\; \bigl\{\, \hat W_n \in \mathcal{N} \,\bigr\}.
	\]
	By Markov's inequality applied to the fourth moment of $\|\hat W_n - W^\star\|_\infty$ and a union bound over the at most $d^2$ entries,
	\[
		\Pr[\mathcal{E}_n^c] \;\leq\; \frac{d^{2} M}{n^{2}\, r^{4}}.
	\]
	Outside this event we have no control on $\hat B_n$, but the event itself is rare. Inside it, the Lipschitz bound gives $|\hat B_{n,ij} - B_{ij}| \leq L\,\|\hat W_n - W^\star\|_\infty$, so for any $t > 0$ a union bound over entries combined with Markov on the fourth moment of $\hat W$ yields
	\[
		\Pr\bigl[\,\{\,|\hat B_{n,ij} - B_{ij}| > t\,\} \cap \mathcal{E}_n\,\bigr] \;\leq\; \frac{d^{2}\, L^{4} M}{n^{2}\, t^{4}}.
	\]
	Combining the two contributions,
	\[
		\Pr\bigl[\,|\hat B_{n,ij} - B_{ij}| > t\,\bigr] \;\leq\; \frac{d^{2}\, L^{4} M}{n^{2}\, t^{4}} \;+\; \frac{d^{2} M}{n^{2}\, r^{4}}.
	\]
	Under the hypothesis $\tau_n \to 0$ and $\tau_n < \frac{\beta_{\min}}{2}$, both relevant deviations $t \in \{\beta_{\min} - \tau_n,\, \tau_n\}$ stay bounded above by a fixed constant for all large $n$, so the second summand is absorbed into the first up to a multiplicative constant: there exists $K > 0$ (depending on $B$ and the noise) such that, for all $n$ large enough and all $t \in \{\beta_{\min} - \tau_n,\, \tau_n\}$,
	\[
		\Pr\bigl[\,|\hat B_{n,ij} - B_{ij}| > t\,\bigr] \;\leq\; \frac{K}{n^{2}\, t^{4}}.
	\]

	\emph{Step 2 (support recovery via two opposing tails).}
	\begin{itemize}
		\item \emph{False negative.} A true edge $|B_{ij}| \geq \beta_{\min}$ is dropped, i.e.\ $|\hat B_{n,ij}| < \tau_n$. By the triangle inequality this requires $|\hat B_{n,ij} - B_{ij}| > \beta_{\min} - \tau_n$, and Step 1 bounds the probability by $\frac{K}{n^{2}\, (\beta_{\min} - \tau_n)^{4}}$.
		\item \emph{False positive.} A true non-edge $B_{ij} = 0$ is kept, i.e.\ $|\hat B_{n,ij}| \geq \tau_n$. By Step 1 this has probability at most $\frac{K}{(n^{2}\, \tau_n^{4})}$.
	\end{itemize}
	A union bound over the at most $d^{2}$ entries gives
	\[
		\Pr\bigl[\operatorname{supp}(\hat B_n) \neq \operatorname{supp}(B)\bigr]
		\;\leq\;
		\frac{d^{2} K}{n^{2}\, (\beta_{\min} - \tau_n)^{4}} \;+\; \frac{d^{2} K}{n^{2}\, \tau_n^{4}},
	\]
	absorbing $d^{2} K$ into $K_1, K_2$.

	\emph{Step 3 (from $\operatorname{supp}(B)$ to the condensation).} Once $\operatorname{supp}(\hat B_n) = \operatorname{supp}(B)$, the directed graph induced by $\hat B_n$ is exactly $G$, so Tarjan's algorithm returns the exact SCC partition and the exact condensation. This step is deterministic, hence
	\[
		\Pr\bigl[\widehat{G}^{\mathrm{sc}}_n \neq G^{\mathrm{sc}}\bigr] \;\leq\; \Pr\bigl[\operatorname{supp}(\hat B_n) \neq \operatorname{supp}(B)\bigr],
	\]
	which combined with Step 2 yields \eqref{eq:finite-sample-bound}. \qedhere
\end{proof}

\paragraph{Sample complexity.} The two terms in \eqref{eq:finite-sample-bound} are balanced when $\tau_n = \frac{\beta_{\min}}{2}$, in which case both reduce to a constant times $\frac{1}{(n^{2}\, \beta_{\min}^{4})}$. Inverting the bound yields:

\begin{corollary}[Sample complexity]
	\label{cor:sample-complexity}
	Under the assumptions of \Cref{prop:finite-sample-rate}, fix the threshold to $\tau = \frac{\beta_{\min}}{2}$. Then for any target error level $\delta \in (0, 1)$, taking
	\[
		n \;\geq\; \frac{4}{\beta_{\min}^{2}}\, \sqrt{\frac{K_1 + K_2}{\delta}}
	\]
	samples is sufficient for
	\[
		\Pr\bigl[\, \widehat{G}^{\mathrm{sc}}_n = G^{\mathrm{sc}} \,\bigr] \;\geq\; 1 - \delta.
	\]
\end{corollary}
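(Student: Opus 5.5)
The plan is to plug $\tau = \beta_{\min}/2$ directly into the bound \eqref{eq:finite-sample-bound} of \Cref{prop:finite-sample-rate} and invert it for $n$. A fixed threshold does not literally satisfy the hypotheses $\tau_n\to 0$ and $\sqrt n\,\tau_n\to\infty$. So the first step is to note that the proof of \Cref{prop:finite-sample-rate} only uses three facts about the threshold: that $\tau_n < \beta_{\min}$ (strictly, so the false-negative deviation $\beta_{\min}-\tau_n$ is positive), that both deviations $t\in\{\beta_{\min}-\tau_n,\tau_n\}$ are bounded above by a fixed constant (needed to absorb the $\mathcal E_n^c$ term), and Steps 2--3. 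All three hold verbatim for the constant choice $\tau=\beta_{\min}/2$, with $t=\beta_{\min}/2$ in both cases. I would therefore state that the bound applies with this constant threshold for all $n$ large enough that the moment bound of Step 1 is in force. If needed, the constants $K_1,K_2$ can be enlarged so the bound holds for every $n$, since probabilities are at most $1$.

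Substituting $\beta_{\min}-\tau=\tau=\beta_{\min}/2$ gives
\[
\Pr\bigl[\widehat G^{\mathrm{sc}}_n\neq G^{\mathrm{sc}}\bigr]\;\le\;\frac{16\,(K_1+K_2)}{n^{2}\,\beta_{\min}^{4}}.
\]
It then suffices to make the right-hand side at most $\delta$, i.e.\ $n^2\ge 16(K_1+K_2)/(\beta_{\min}^4\delta)$. Taking square roots gives exactly $n\ge \frac{4}{\beta_{\min}^{2}}\sqrt{(K_1+K_2)/\delta}$. The complement event then has probability at least $1-\delta$, which is the claim.

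The only delicate step is the first one: justifying the use of \Cref{prop:finite-sample-rate} outside its stated asymptotic threshold regime, and making sure "for $n$ large enough" in Step 1 does not conflict with the explicit lower bound on $n$. I would handle this by either absorbing the finitely many small $n$ into $K_1+K_2$, as above, or adding the caveat that $n$ must also exceed the threshold from Step 1. The rest is arithmetic.
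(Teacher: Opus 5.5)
Your proposal is correct and follows the paper's proof: substitute $\tau=\beta_{\min}/2$ into \eqref{eq:finite-sample-bound} to get $16(K_1+K_2)/(n^2\beta_{\min}^4)\le\delta$, then solve for $n$. Your extra justification goes beyond what the paper says explicitly. You note that a constant threshold violates the stated hypotheses of \Cref{prop:finite-sample-rate}, including the strict $\tau_n<\beta_{\min}/2$, while its proof only needs $0<\tau<\beta_{\min}$ with bounded deviations, and you deal with the ``$n$ large enough'' caveat. The paper leaves all of this implicit.
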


\begin{proof}
	Substitute $\tau_n = \frac{\beta_{\min}}{2}$ into \eqref{eq:finite-sample-bound}: the sum of the two terms is bounded by $\frac{16(K_1 + K_2)}{n^{2}\, \beta_{\min}^{4}}$. Requiring this to be at most $\delta$ and solving for $n$ gives the stated bound.
\end{proof}

Three remarks on \Cref{cor:sample-complexity}. First, the sample complexity scales as $n \asymp \frac{d}{\beta_{\min}^{2}\, \sqrt\delta}$ once the polynomial-in-$d$ factor inside $K_1, K_2$ (namely $K_i \asymp d^{2}$) is unpacked---linear in the dimension and quadratic in the inverse minimum-edge magnitude, with $\frac{1}{\sqrt\delta}$ dependence on the failure probability. Stronger noise tails (e.g.\ sub-Gaussian) would replace the polynomial $\delta$-dependence with $\log(\frac{1}{\delta})$ via a Hoeffding-type bound, but bounded fourth moments alone yield the polynomial rate above. Second, the choice $\tau = \frac{\beta_{\min}}{2}$ requires a priori knowledge of $\beta_{\min}$, which we do not have in practice; the alternative $\tau_n \asymp n^{-\frac{1}{4}}$ (any sequence with $\tau_n \to 0$ and $\sqrt n\, \tau_n \to \infty$) gives a $\frac{1}{n}$ rate instead of $\frac{1}{n^{2}}$ but is robust to misspecification of $\beta_{\min}$. Third, the bound \eqref{eq:finite-sample-bound} reproduces the threshold-sensitivity picture of \Cref{app:threshold-sensitivity} term by term: the failure modes of \Cref{prop:conservative} are exactly the two error events that the union bound in Step 2 controls.

\paragraph{Empirical corroboration.} \Cref{fig:sample-complexity} reports a Monte-Carlo experiment that tests the rate of \Cref{prop:finite-sample-rate} on the same data-generating regime as the main-grid experiments of \Cref{sec:experiments} ($d = 10$, $\kappa = 4$ non-trivial SCCs, $\lambda = 0.5$, weights uniform in $[0.5, 0.95]$, Laplace noise; $\beta_{\min} \approx 0.51$). We fix one ground-truth SCM, threshold at $\tau = \frac{\beta_{\min}}{2}$ as prescribed by \Cref{cor:sample-complexity}, and over 300 independent seeds per cell sweep $n \in [10^{2}, 10^{5}]$ on a log-spaced grid. The left panel plots $\mathbb{E}[d_H(\operatorname{supp}\hat B,\, \operatorname{supp} B)]$, a smooth proxy for the entry-wise tail probability. The right panel plots the binary exact-recovery probability $\Pr[\operatorname{supp}\hat B = \operatorname{supp}B]$ with $95\%$ CI bands. An Ordinary Least Squares (OLS) log-log fit on the cells with non-trivial failure rate gives empirical slopes of $-4.0$ for mean Hamming distance and $-2.9$ for exact-recovery error in the transition window, both steeper than the $-2$ predicted by \Cref{prop:finite-sample-rate}. This is consistent with the proposition: the bound \eqref{eq:finite-sample-bound} is an upper bound under bounded fourth moments alone, and Laplace noise has faster-than-polynomial tails so the actual decay can be sharper. Recovery is exact in every seed by $n = 2000$.

\begin{figure}[t]
	\centering
	\includegraphics[width=\textwidth]{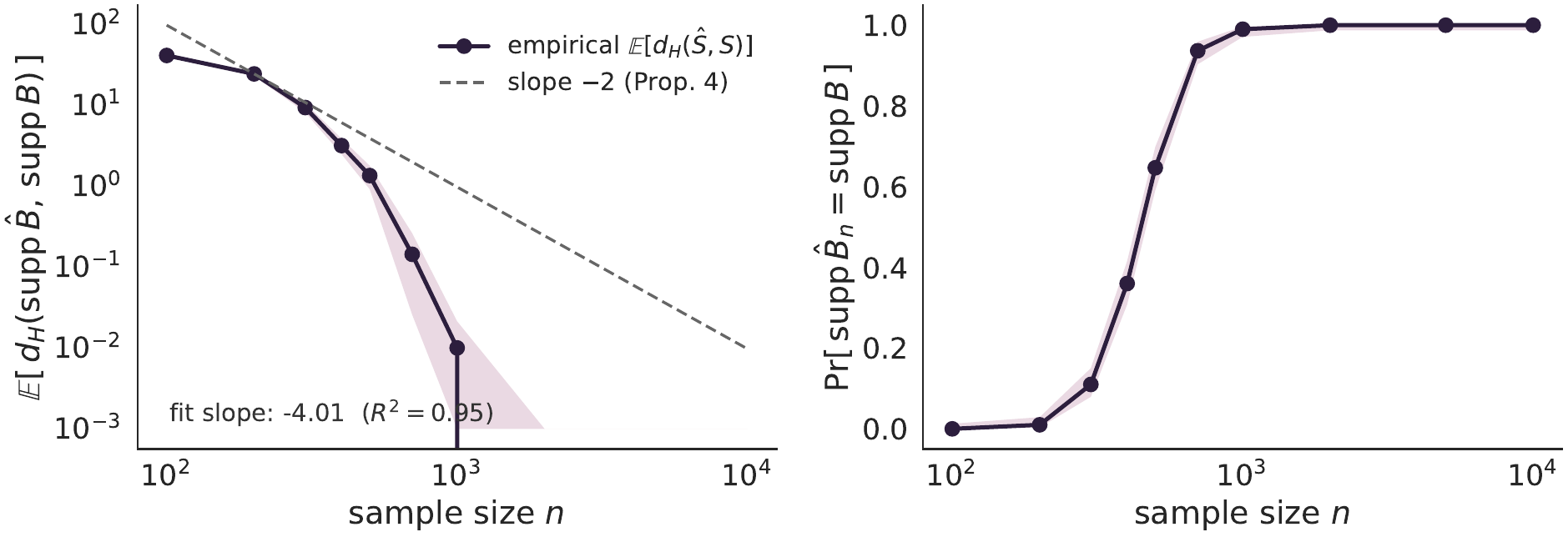}
	\caption{Empirical corroboration of \Cref{prop:finite-sample-rate} on the main-grid setup ($d=10$, $\kappa=4$, $\lambda=0.5$, $\beta_{\min} \approx 0.51$, $\tau = \frac{\beta_{\min}}{2}$, 300 seeds per cell). \textbf{Left:} $\mathbb{E}[d_H(\operatorname{supp}\hat B,\, \operatorname{supp} B)]$ vs $n$ on log-log; the dashed reference line has the predicted slope $-2$ and the empirical slope is steeper, consistent with the bound being a conservative upper bound under Laplace noise. \textbf{Right:} $\Pr[\operatorname{supp}\hat B_n = \operatorname{supp} B]$ vs $n$ with $95\%$ CI bands; the slope in the transition window $n \in [200, 1000]$ is $-2.9$. Recovery is exact in every seed at $n \geq 2000$.}
	\label{fig:sample-complexity}
\end{figure}

\subsection{Proof of \Cref{prop:conservative}}
\label{app:failure}

\begin{proof}
	(a) Each non-trivial SCC contains at least one directed cycle. Removing any edge that lies on every cycle of an SCC disconnects the strong-connectivity relation on that SCC, partitioning it into smaller SCCs. (b) A spurious edge $\hat B_{ij}\neq 0$ with $i\in \pi,\ j\in\pi'$ in the estimated graph, together with a path $\pi'\leadsto\pi$ already present in $G$, closes a directed cycle through the two clusters and merges them.
\end{proof}

\section{Additional Results}
\label{app:results}

\subsection{Strict disjoint-cycles micro experiment}
\label{app:disjoint-micro}

The main scalability sweep of \cref{sec:scalability} at density $\lambda = 0.5$, creates multiple overlapping cycles within each SCC of size $\ge 3$, violating the disjoint-cycles assumption of \cite{pmlr-v286-drton25a}. To isolate the effect of that violation from the methodological gap, we run a complementary comparison whose data-generating process satisfies the assumption \emph{exactly} by construction.

\paragraph{Setup.} We fix $d = 20$ with $\kappa = 5$ non-trivial SCCs,
inter-block density $\lambda = 0.5$, and \emph{zero} intra-SCC chord
density: each SCC is generated as a single directed Hamilton cycle and
no further intra-SCC edges are added. Inter-block edges respect a
random DAG ordering over the $\kappa = 5$ blocks. By construction,
every directed simple cycle in the resulting graph lies inside a single
SCC and is the unique cycle of that SCC, so all simple cycles are
pairwise vertex-disjoint, which is the structural premise of
\cite{pmlr-v286-drton25a}. We sweep
$n \in [10^2,  10^5]$
with $10$ seeds per cell, in the stable weight regime ($w \in [0.5,
		0.95]$, $\rho(B) < 1$), and use skewed exponential noise so that the
baseline's third-moment tests carry signal. Both methods use the same
fit configuration as the main scalability sweep.

\paragraph{Results.} \Cref{fig:disjoint-micro} reports ARI of the
recovered partition, cluster-DAG $F_1$, and fit time. Our method
saturates at ARI $= 1$ and cluster-DAG $F_1 = 1$ from $n \ge 10^4$,
with fit time staying within $\sim 1$\,s across the full sweep. The
disjoint-cycles baseline plateaus at median ARI $\approx 0.13$ and
median cluster-DAG $F_1 \approx 0.60$ even at $n = 10^5$, with fit
time growing to $10^2$\,s.

Inspection of the baseline's predicted partitions explains the gap:
at $n \le 10^3$, every seed returns the trivial all-singletons
partition (the procedure's third-moment cycle tests at $\alpha = 0.01$
have insufficient power and confirm no cycle); at $n \ge 10^4$, the
output either contains seeds where some 2--3-cycles are correctly
recovered and seeds where it merges most variables
into a single cycle.

\begin{figure}[t]
	\centering
	\includegraphics[width=\textwidth]{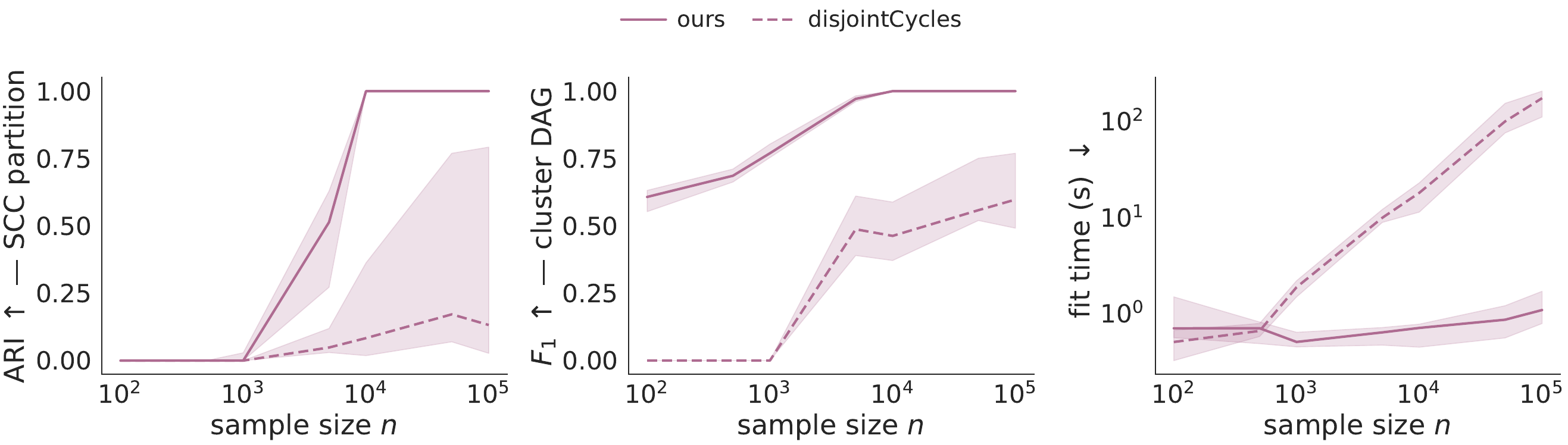}
	\caption{Strict disjoint-cycles micro experiment ($d = 20$,
		$\kappa = 5$, every SCC a single Hamilton cycle, $\lambda = 0.5$
		inter-block, $10$ seeds per cell). Solid: ours. Dashed:
		disjointCycles \cite{pmlr-v286-drton25a}. Left to right: ARI of
		the recovered partition, cluster-DAG $F_1$, fit time (s, log).
		Shaded bands: $95\%$ CI across seeds.}
	\label{fig:disjoint-micro}
\end{figure}

\subsection{Threshold sensitivity and the two failure modes}
\label{app:threshold-sensitivity}

\Cref{alg:condensation} requires a threshold $\tau > 0$ on $|\hat B_{ij}|$. \Cref{prop:conservative} characterises two opposite failure modes; this subsection probes both empirically. We fix $d = 10$, $\kappa = 4$, $\lambda = 0.5$ in the stable regime ($\rho(B) < 1$) and sweep $\tau$ across three orders of magnitude on a synthetic grid with 10 seeds per cell. The number of non-trivial SCCs is exactly $\kappa = 4$ in every seed, but the singletons absorbed into non-trivial SCCs vary, so the true partition size $|\Pi|$ ranges over $\{4, 5, 6\}$ across seeds (median $5$).

\paragraph{Metrics.} \Cref{fig:threshold-sensitivity} reports three quantities. The first two are the same metrics used in \Cref{sec:experiments}: ARI of the recovered SCC partition (left) and $F_1$ of the recovered cluster-DAG edges (middle). The third is the size $|\widehat\Pi|$ of the estimated partition (right) ---the failure modes:
\begin{itemize}
	\item $|\widehat\Pi| = 1$ (everything in one cluster): spurious entries above $\tau$ closed paths between distinct true SCCs and merged everything (Mode (b) of \Cref{prop:conservative}).
	\item $|\widehat\Pi| = d = 10$ (every variable is a singleton): true intra-SCC edges fell below $\tau$ and every SCC splintered (Mode (a)).
	\item $|\widehat\Pi| \in \{4, 5, 6\}$ (the $\tau$-independent range of the truth): the recovered partition agrees with the per-seed truth.
\end{itemize}

\paragraph{Results.} At $n \geq 5000$ the safe band is wide and roughly $\tau \in [0.1, 0.5]$: ARI $= 1$ and $F_1 = 1$ in every seed and $|\widehat\Pi|$ matches the per-seed truth. Outside that band the figure shows the two failure modes unambiguously: at $\tau \leq 0.01$ the right panel collapses to $|\widehat\Pi| = 1$, at $\tau = 1$ it climbs to $|\widehat\Pi| = d = 10$, and ARI / $F_1$ collapse correspondingly. The plateau begins later as $n$ shrinks (at $n = 500$ it begins only at $\tau \geq 0.2$), reflecting the larger finite-sample fluctuations of $\hat B$. The default $\tau = 0.1$ used throughout \Cref{sec:experiments} sits well inside the safe band whenever $n \geq 5000$.\footnote{The plateau at $F_1 \approx 2/3$ for $\tau \to 0$: At very small $\tau$ the recovered $\hat B$ is near-fully dense, so the predictor effectively guesses every directed cluster-pair as an edge: recall is $1$ and precision is the fraction of cluster-pairs that are truly connected, which for $\lambda = 0.5$ inter-block density gives precision $\approx 1/2$ and hence $F_1 \approx 2/3$. The plateau at $1$ for moderate $\tau$ is the regime of true recovery.}

\begin{figure}[t]
	\centering
	\includegraphics[width=\textwidth]{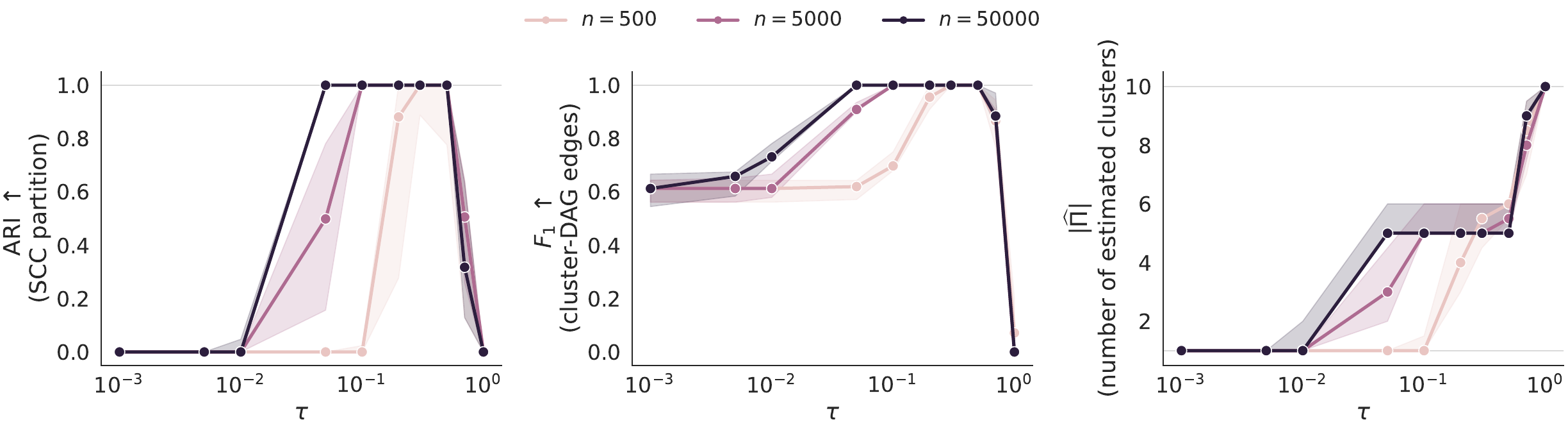}
	\caption{Threshold sensitivity at $d = 10$, $\kappa = 4$, $\lambda = 0.5$ (stable regime, 10 seeds). \textbf{Left:} ARI of the recovered SCC partition. \textbf{Middle:} $F_1$ of the recovered cluster-DAG edges. \textbf{Right:} number of estimated clusters $|\widehat\Pi|$; the true $|\Pi|$ varies by seed across $\{4, 5, 6\}$ with median $5$, and inside the safe band $|\widehat\Pi|$ matches the per-seed truth. Curves are coloured by sample size. The right panel makes the two failure modes of \Cref{prop:conservative} explicit: $\tau \to 0$ collapses to $|\widehat\Pi| = 1$ (Mode (b)) and $\tau \to 1$ saturates at $|\widehat\Pi| = d = 10$ (Mode (a)). Shaded bands: $95\%$ CI across seeds.}
	\label{fig:threshold-sensitivity}
\end{figure}

\newpage

\end{document}